\newtheorem{ther}{Theorem}
\newtheorem{prop}[ther]{Proposition}
\begin{document}

\runningauthor{Velikanov, Kail, Anokhin, Vashurin, Panov, Zaytsev, Yarotsky}

\twocolumn[

\aistatstitle{Embedded Ensembles: Infinite Width Limit and Operating Regimes}

\aistatsauthor{Maksim Velikanov  
\And
Roman Kail \And
Ivan Anokhin
\And
Roman Vashurin}

\aistatsaddress{Skoltech \And Skoltech \And 
Skoltech \& Yandex \And
Skoltech} 

\aistatsauthor{
Maxim Panov \And 
Alexey Zaytsev \And 
Dmitry Yarotsky}

\aistatsaddress{Skoltech \And Skoltech \And 
Skoltech} ]

\begin{abstract}
  A memory efficient approach to ensembling neural networks is to share most weights among the ensembled models by means of a single reference network. We refer to this strategy as \emph{Embedded Ensembling} (EE); its particular examples are BatchEnsembles and Monte-Carlo dropout ensembles. In this paper we perform a systematic theoretical and empirical analysis of embedded ensembles with different number of models. 
  Theoretically, we use a Neural-Tangent-Kernel-based approach to derive the wide network limit of the gradient descent dynamics. In this limit, we identify two ensemble regimes -- \emph{independent} and \emph{collective} -- depending on the architecture and initialization strategy of ensemble models. We prove that in the independent regime the embedded ensemble behaves as an ensemble of independent models.  We confirm our theoretical prediction with a wide range of experiments with finite networks, and further study empirically various effects such as transition between the two regimes, scaling of ensemble performance with the network width and number of models, and dependence of performance on a number of architecture and hyperparameter choices. 

\end{abstract}

\section{Introduction}
\label{sec:introduction}

A common strategy of improving accuracy of predictive models is \emph{model ensembling}~\cite{dietterich2000ensemble}. 
In its simplest form, several models are constructed independently, and their outputs are averaged. 
Despite its simplicity, this strategy is very reliable and efficient, almost invariably improving the accuracy and robustness of predictions~\cite{dusenberry2020efficient}.

However, a major downside of this strategy is a substantial increase of resources required for model training and execution: training time, inference time, and required storage scale linearly with the number of  models in the ensemble.
This downside is especially acute for deep neural networks (DNNs) since they are already complex -- as a result, DNN ensembles become challenging or even infeasible in many applications~\cite{schwenk2000boosting,huang2017snapshot}. 

Recently significant attention was paid to the construction of ``lightweight'' ensembles that mitigate this issue~\cite{wen2019batchensemble,havasi2020training,rame2021mixmo,wenzel2020hyperparameter,zhang2021ex}. 
A~lightweight ensemble attempts to retain the accuracy gain from ensembling while relaxing requirements for a particular resource. 
For example, snapshot ensembles~\cite{huang2017snapshot,garipov2018loss} reduce the ensemble training time (without significantly affecting the storage and inference time). Lightweight ensembles typically have a lower accuracy than the standard independent ensembles of the same size, because of a lower diversity of their members. 

In this work we address what we call \emph{Embedded Ensembles} (EE). Their common idea is to construct different models by some kind of perturbation of a single reference neural network. Examples of EE include Monte-Carlo (MC) dropout ensembles~\cite{gal2016dropout} and BatchEnsembles~\cite{wen2019batchensemble}. Most weights in an embedded ensemble are just the shared reference network weights, so this ensemble requires much less storage than a respective ensemble of independent reference networks. Furthermore, if the perturbation is restricted to the last layers, then network computations can be efficiently reused among ensembled models
making computation time comparable to that of a single model.

The price one pays for this efficiency is the lower accuracy of embedded ensembles. In fact, while the accuracy of the usual independent ensembles only increases with additional models, it was observed empirically~\cite{havasi2020training} that the accuracy of embedded ensembles can degrade when the number of ensemble members is large.   

The primary purpose of this work is to systematically explore how performance of embedded ensembles scales with the number of models. An additional important factor that we consider is the size of the reference network. Intuitively, larger reference networks can accommodate more uncorrelated models and so provide higher ensemble accuracy. We confirm this intuition, both theoretically and empirically. 

\paragraph{Our contribution.}
We perform an extensive theoretical and empirical study of Embedded Ensembles. 

\begin{itemize}
    \item We describe the behaviour of Embedded Ensembles in the limit of infinite reference network width. Particularly, we derive dynamic equation of EE model outputs describing their evolution under gradient descent. Also, we characterize at initialization the distribution of ensemble outputs and Neural Tangent Kernel of the ensemble.

    \item In the infinite width limit we identify \textit{independent} and \textit{collective} operating regimes of Embedded Ensembles. In the independent regime we show that EE is fully identical to the ensemble of independent reference networks. Also, we propose to use different gradient scalings for independent and collective regimes to ensure proper behavior of EEs with large number of ensemble models. We show that the operating regime of Embedded Ensemble is determined by the structure of individual parameters and their initialization strategy.

    \item We perform extensive experiments with embedded ensembles on the CIFAR100 data set. We empirically observe the  collective and independent regimes and demonstrate the transition between them.  We observe that finite-width EEs in the independent regime have an optimal number of models at which the highest accuracy is achieved; in agreement with our theory this optimal number increases with the network width. We further explore, both empirically and theoretically, a number of architecture modifications and the scaling of the learning rates. 
\end{itemize}

\begin{figure}[t!]
  \centering
    
  \begin{tikzpicture}[scale=0.8]
    \def\r{0.08}
    \def\l{2}
    \def\ll{0.8}
    \def\d{0.3}
    \def\z{0.3}
    \def\q{0.05}
    \def\nlayers{3}
    \def\nchannels{4}

    \def\colorPallete{{"0 1 0 0", "0 0 1 0", "0 0 0 1"}}

    \foreach \layer in {1,...,\nlayers}{
    \foreach \channel in {1,...,\nchannels}{

    \coordinate (post) at (\layer*\l, \channel);
    \fill  (post) circle (\r);
    \foreach \chan in {1,...,\nchannels}{
      \coordinate (pre) at (\layer*\l+\ll, \chan);
      \fill  (pre) circle (\r);
      \draw (post) -- (pre);
    }

    \node[circle,draw,minimum size=4mm, inner sep=0pt, outer sep=0pt] (phi) at (\layer*\l+0.5*\l+0.5*\ll,\channel) {$\phi$};

    \node[circle,draw,fill=black, minimum size=\r, inner sep=0pt, outer sep=2.5] (pre) at (\layer*\l+\ll, \channel) {};

    \node[circle,draw,fill=black, minimum size=\r, inner sep=0pt, outer sep=2.5] (post) at (\layer*\l+\l, \channel) {};

    \foreach \model in {-1,0,1}{
      \coordinate (modelPre) at (\layer*\l+0.25*\l+0.75*\ll-\q, \channel+\z*\model);
      \coordinate (modelPost) at (\layer*\l+0.25*\ll+0.75*\l+\q, \channel+\z*\model);
      
    \pgfmathparse{\colorPallete[\model+1]};
    \definecolor{currentColor}{rgb}{\pgfmathresult};
      
      \fill [currentColor] (modelPre) circle (\r);
      \draw [currentColor] (modelPre) -- (pre);
      \draw [currentColor] (modelPre) -- (phi);
      
      \fill [currentColor] (modelPost) circle (\r);
      \draw [currentColor] (modelPost) -- (post);
      \draw [currentColor] (modelPost) -- (phi);
    }
    }
    }


    \def\posx{-1}
    \def\posy{3}

    \foreach \model in {1,2,3}{
      
    \pgfmathparse{\colorPallete[\model-1]};
    \definecolor{currentColor}{rgb}{\pgfmathresult};

    \fill [currentColor] (\posx, {\posy-0.5*(\model-2)}) circle (\r);
    \node [] at (\posx+1, {\posy-0.5*(\model-2)}) {model \model};
    }

    \fill [] (\posx, {\posy-0.5*2}) circle (\r);
    \node [] at (\posx+1, {\posy-0.5*2}) {common};
  \end{tikzpicture}

    \caption{All models in the BatchEnsemble have common fully-connected (or convolutional) weights (colored  black), and a small number of pre- and post- activation modulations (colored red, green or blue) that differ for each model. For each model, only the respective family of modulations (i.e., red, green or blue) is active on a forward pass. }
    \label{fig:teaser}
  \end{figure}
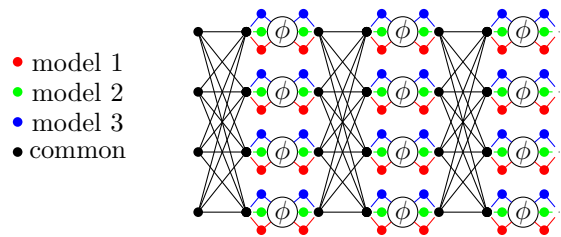 
  
\section{Related Work}
  Early attempts to reach a compromise between increased performance of ensembles and low computational and memory costs of single models were focused on so-called Monte-Carlo Dropout in the context of uncertainty estimation~\cite{gal2016,gal2016dropout}. Recently, there was an increased interest in improving predictive characteristics of embedded ensembles. Several different approaches were suggested, such as BatchEnsembles~\cite{wen2019batchensemble}, MIMO~\cite{havasi2020training}, Masksembles~\cite{durasov2021masksembles} and Orthogonal Dropout~\cite{zhang2021ex}.

  There is very limited literature analysing embedded ensembles' characteristics as a function of parent network size and number of ensemble models. \cite{seoh2020qualitative} focuses on MC Dropout, but mostly looks at uncertainty estimation. \cite{Sicking2020DropoutCharacteristics} explores connection of MC Dropout with Gaussian processes. \cite{Verdoja2021Dropout} focuses on MC Dropout, taking a look on how well it captures variance in the data.

  One example of the study more relevant to ours is~\cite{zhang2021ex}, which introduces a special variation of MC Dropout, and provides insights in how well it performs for different ensemble sizes.

  Fully-connected neural networks in the limit of infinite width are equivalent to Gaussian Processes (GP) at initialization~\cite{lee2018deep} and behave as linear models under gradient descent~\cite{lee2019wide} with Neural Tangent Kernel (NTK; \cite{jacot2018neural}) as a matrix governing linear training dynamics. Although actual NNs are not fully described by their infinite width limit~\cite{Lee2020FiniteVI,Hanin2020FiniteDA,Liu2020OnTL,Huang2020DynamicsOD,Lewkowycz2020TheLL}, this limit becomes a convenient way to study NNs and was generalized to a number of modern architectures:  RNN~\cite{Alemohammad2021TheRN}, CNN~\cite{Arora2019OnEC}, Attention~\cite{Hron2020InfiniteAN}, Deep Bayesian Ensembles~\cite{He2020BayesianDE}. The latter work uses GPs and NTK to analytically study ensembles of independent DNNs, while in current work we use these tools to study Embedded Ensembles.

\section{Embedded Ensembles}\label{sec:EE_description}
  Consider a \textit{reference network}\footnote{The output $f$ of a predictive model generally depends on the input $\mathbf x$ and the model parameters that in turn depend on the training time $t$. For brevity, we will occasionally omit some arguments of $f$ in formulas.} $f(\mathbf{w}, \mathbf{u}, \mathbf{x})$ with two sets of parameters $\mathbf{w}, \mathbf{u}$, and inputs $\mathbf{x}$. To build an Embedded Ensemble of size $M$ based on this reference model we choose a single set of parameters $\mathbf{w}$ to be shared across all $M$ ensemble models, and $M$ sets $\{\mathbf{u}_{\alpha}\}_{\alpha=1}^M$ of parameters $\mathbf{u}$ to serve as individual parameters of ensemble models. Then the ensemble prediction at input $\mathbf{x}$ can be written as
  \begin{equation*}
    f^{\text{ens}}(\mathbf{x}) = \frac{1}{M} \sum_{\alpha = 1}^M f_\alpha(\mathbf{x}), \quad f_\alpha(\mathbf{x}) \equiv f(\mathbf{w}, \mathbf{u}_\alpha, \mathbf{x}).
  \end{equation*}
  The sharing of the weights $\mathbf{w}$ across ensemble models makes EE memory efficient. Particularly, in the case of the reference model having much fewer parameters $\mathbf{u}$ than $\mathbf{w}$, the total number of parameters in EE is approximately the same as in the reference model. On the other hand, the sharing of the common parameters makes ensemble models $f_\alpha(\mathbf{x})$ correlated and interacting during training. The study of these interaction effects, and situations where they are suppressed, is the goal of the current work. 

  We focus on two particular examples of embedded ensembles: BatchEnsembles and MC dropout ensembles.

\paragraph{BatchEnsembles.}
  In a BatchEnsemble~\cite{wen2019batchensemble,wen2020improving,dusenberry2020efficient}, each model  has the same architecture as the reference model up to an additional rescaling of the inputs and outputs of neurons; we will call these rescalings \emph{modulations}.
  
  A standard fully-connected layer in a neural network can be written as 
  \begin{EQA}[c]
    z^{l}_{j} = \frac{1}{\sqrt{N_{l-1}}} \sum\limits_{i = 1}^{N_{l - 1}} x_{i}^{l-1} W^{l}_{ij} + b^l_j,
    \quad
    x^{l}_{j} = \phi\bigl(z^l_{j}\bigr),
  \label{eq:linear_layer}
  \end{EQA}
  where index $l$ corresponds to a layer, $N_l$ denotes the number of neurons in $l$-th layer, $z^{l}_{j}$ are usually called pre-activations and $x^{l}_{j}$ denotes the output of the $j$-th neuron in the $l$-th layer. The factor $N_{l-1}^{-1/2}$ is written for future convenience of passing to the wide network limit.  In the BatchEnsemble, in each neuron of each submodel $\alpha = 1, \ldots, M$, the activation $x = \phi(z)$ is replaced by the ``modulated'' activation:
  \begin{equation}
  \label{eq:batch_layer}
    \begin{cases}
      z^{l}_{\alpha j} = \frac{1}{\sqrt{N_{l-1}}} \sum\limits_{i = 1}^{N_{l - 1}} x_{\alpha i}^{l-1} W^{l}_{ij} + b^l_j,
      \\
      x^{l}_{\alpha j} = u^{l}_{\alpha j} ~ \phi\bigl(v^{l}_{\alpha j} z^l_{\alpha j}\bigr),
    \end{cases}
  \end{equation}
where $\mathbf{u}_{\alpha} = \{u^l_{\alpha j}, v^l_{\alpha j}\}$ are the \emph{modulating weights} depending in particular on the model $\alpha$ in the ensemble, the layer $l$ in the reference network and an index $j$ of a neuron in a fully-connected layer. At the same time, all models of the ensemble share the same set of weights $\mathbf{w} = \{W^{l}_{ij}, b^l_j\}$ inherited from the reference network. The full scheme of BatchEnsemble architecture is presented on Figure~\ref{fig:teaser}. 
  
There is a natural generalization of this structure to convolutional layers with multiple channels:
the modulating weights depend on the channel $j$ and are shared by all neurons of the channel.

\paragraph{MC dropout ensembles.}
Dropout was originally proposed as a regularization technique for deep learning used at training time~\cite{JMLR:v15:srivastava14a}. However, when applied at inference time, it can also be viewed as an ensembling approach. 

It can be defined as a modification of BatchEnsemble in which only the post-activation modulating weights $u_{\alpha j}^l$ are present in equation~\eqref{eq:batch_layer} (no $v_{\alpha j}^l$), and these modulating weights are not optimized after random initialization (i.e., only the shared weights $\mathbf w$ are learned).
  
When the modulating weights $u_{\alpha j}^l$ have a Bernoulli 0--1 distribution, their realizations can be referred to as ``dropout masks'' because they effectively eliminate some neurons from the network. We will, however, allow $u_{\alpha j}^l$ to have general distributions.
  
A particular special case of MC dropout ensembles occurs if the modulating weights $u_{\alpha j}^l$ are present only in the last hidden layer. In this case all the network computations up to the last layer are shared among the ensemble models, allowing to substantially accelerate the overall computation of the ensemble. We will refer to this case as the \emph{Last Layer Dropout (LLD) ensemble.}

\subsection{Training of Embedded Ensembles}\label{sec:train_emb_ens}
  We assume that models in EE are trained simultaneously with a version of (Stochastic) Gradient Descent. At each GD step the batch of training pairs $\mathcal{B}=\{(\mathbf{x}_b, y_b)\}_{b = 1}^B$ is given to each ensemble model $f_\alpha(x)$ and corresponding empirical loss is calculated
  \begin{equation}
    \mathcal{L}_\alpha(\mathbf{w}, \mathbf{u}_\alpha) = \frac{1}{B}\sum\limits_{b = 1}^B L(f(\mathbf{w}, \mathbf{u}_\alpha,\mathbf{x}_b), y_b).
  \end{equation}
  Here $L(\hat{y}, y)$ is a single point loss function. The parameters of the ensemble are then updated based on loss gradients of all ensemble models 
  \begin{align}
    \label{eq:individual_grads}
    \Delta \mathbf{u}_\alpha &= -\eta_{u} \frac{\partial \mathcal{L}_\alpha(\mathbf{w}, \mathbf{u}_\alpha)}{\partial \mathbf{u}_\alpha}, \\
    \label{eq:common_grads}
    \Delta \mathbf{w} &= -\eta_{w} \frac{\gamma(M)}{M} \sum\limits_{\alpha=1}^M \frac{\partial \mathcal{L}_\alpha(\mathbf{w}, \mathbf{u}_\alpha)}{\partial \mathbf{w}}.
  \end{align}
  Here different learning rates $\eta_w, \eta_u$ are used emphasize that common and individual parameters have different roles in the EE and thus it might be beneficial to learn them with different rates. Note that according to~\eqref{eq:common_grads} each ensemble model contributes to update of common parameters $\Delta \mathbf{w}$. A natural question then is how to correctly accumulate gradients from different models $\alpha$, which becomes especially important when the number of models $M$ becomes big. We propose to control accumulation of gradients by means of scaling factor $\gamma(M)$. In the sequel we will argue that the natural choice for scaling factor is either $\gamma(M)=1$ or $\gamma(M)=M$ depending on whether the ensemble is in independent or collective regime.    

\section{Theoretical Analysis}
\label{sec:theory}
  One particular goal of EE's analysis is to quantify and minimize the dependencies between different ensemble models in order to make performance of EE close to performance of independent ensemble. We find that this analysis simplifies significantly in the limit of infinite network width, where ensemble models can be made completely independent.  

  Let us first informally discuss in which sense different models in EE could be independent despite sharing most of their parameters. We argue that independence of two different models can be achieved if they are independent at initialization and have \textit{dynamic independence}. At initialization, the outputs $f_\alpha(\mathbf{x})$ and $f_\beta(\mathbf{x}')$ of two different EE models at any pair of inputs  are random variables w.r.t. random initialization of parameters $\mathbf{w}, \mathbf{u}_\alpha, \mathbf{u}_\beta$ and thus their independence can be defined in the standard sense. 

  To define {dynamic independence} consider a single gradient descent step where the update of common parameters~\eqref{eq:common_grads} is composed of contributions of each ensemble model: $\Delta \mathbf{w} = \sum_\beta \Delta_\beta \mathbf{w}$. Then, the change of model $\alpha$ output $\Delta f_\alpha(\mathbf{x})$ induced by update $\Delta \mathbf{w}$ is also decomposed into contributions $\Delta_\beta \mathbf{w}$ from different EE models:
  \begin{equation}
    \hspace{-5pt} \Delta f_\alpha(\mathbf{x}) = \sum\limits_{\beta=1}^M\frac{\partial f_\alpha(\mathbf{x})}{\partial \mathbf{w}} \Delta_\beta \mathbf{w} \propto -\sum\limits_{\beta=1}^M \frac{\partial f_\alpha(\mathbf{x})}{\partial \mathbf{w}} \frac{\partial \mathcal{L}_\beta}{\partial \mathbf{w}}.
  \end{equation}
  We see now that \emph{dynamic independence} of model $\alpha$ from model $\beta$ can be defined as zero contribution of updates $\Delta_\beta \mathbf{w}$ on output $f_\alpha(\mathbf{x})$. This is achieved when model $\alpha$ output gradient is orthogonal to model $\beta$ loss gradient: $\frac{\partial f_\alpha(\mathbf{x})}{\partial \mathbf{w}} \frac{\partial \mathcal{L}_\beta}{\partial \mathbf{w}}=0$. 

  We now turn to description of EE output dynamics and its connection to ensemble models independence.

\subsection{Output dynamics}\label{sec:output_dyn}

\paragraph{Single network.} 
  In the case of a single neural network $f(\mathbf{w},\mathbf{x})$, the output training dynamic can be conveniently described in terms of \textit{Neural Tangent Kernel} (NTK; \cite{jacot2018neural}). For the continuous time gradient descent (gradient flow) one has
  \begin{align}
    \label{eq:single_network_output_dynamic}
    &\frac{df(\mathbf{x})}{dt}=-\frac{1}{B}\sum\limits_{b = 1}^B \Theta(\mathbf{x}, \mathbf{x}_b, \mathbf{w})\frac{\partial L(f(\mathbf{x}_b), y_b)}{\partial f(\mathbf{x}_b)}, \\
    &\Theta(\mathbf{x},\mathbf{x}',\mathbf{w}) = \frac{\partial f(\mathbf{w},\mathbf{x})}{\partial \mathbf{w}}\frac{\partial f(\mathbf{w},\mathbf{x}')}{\partial \mathbf{w}}.
  \end{align}
  Here $\Theta(\mathbf{x},\mathbf{x}',\mathbf{w})$ is the Neural Tangent Kernel of network $f(\mathbf{x},\mathbf{w})$. Note that the dependence of the NTK on parameters $\mathbf{w}$ in~\eqref{eq:single_network_output_dynamic} does not allow to completely ignore parameters $\mathbf{w}$ and obtain a closed dynamic equation in terms of only outputs $f(\mathbf{x})$. However, the situation greatly simplifies in the limit of the infinite network width $N\rightarrow \infty$, where the following three statements hold~\cite{jacot2018neural,lee2019wide}:
  \begin{itemize}
    \item[(A1)] The network output $f(\mathbf{x})$ is a draw from a Gaussian Process (GP) at initialization.
    \item[(A2)] The NTK $\Theta(\mathbf{x}, \mathbf{x}')$ converges to a non-random deterministic value at initialization. 
    \item[(A3)] The NTK $\Theta(\mathbf{x}, \mathbf{x}')$ stays constant during training.
  \end{itemize}
  In particular, the combination of statements (A2, A3) means that~\eqref{eq:single_network_output_dynamic} is a closed-form dynamic equation in network outputs. 

\paragraph{Embedded Ensemble.} 
  Now we extend the approach outlined above and obtain dynamic equation of EE outputs $f_\alpha(\mathbf{x})$ as well as analogues of statements (A1-A3). Dynamic equation can be obtained from parameter update rules~\eqref{eq:individual_grads}-\eqref{eq:common_grads} and $\frac{\partial \mathcal{L}}{\partial \mathbf{w}}=\frac{\partial \mathcal{L}}{\partial f}\frac{\partial f}{\partial \mathbf{w}}$. Detailed derivation as well as background on NTK can be found in Sections~\ref{sec:NTK_background} and~\ref{sec:EEs_NTK}. The result is    
  \begin{equation}\label{eq:EE_outputs_dynamic}
    \frac{d f_\alpha(\mathbf{x})}{dt} = -\frac{1}{B}\sum\limits_{b,\beta}\Theta_{\alpha\beta}(\mathbf{x},\mathbf{x}_b) \frac{\partial L(f_\beta(\mathbf{x}_b), y_b)}{\partial f_\beta(\mathbf{x}_b)}.
   \end{equation}
  Here NTK $\Theta_{\alpha\beta}(\mathbf{x},\mathbf{x}')$ acquires ensemble model indices $\alpha,\beta$  and can be represented as the sum of contributions from common and individual parameters $\mathbf{w}$ and $\mathbf{u}_\alpha$:
  \begin{equation}\label{eq:EE_NTK0}
    \begin{split}
      &\Theta_{\alpha\beta}(\mathbf{x},\mathbf{x}') = \frac{\gamma(M)}{M}\Theta^{\text{com}}_{\alpha\beta}(\mathbf{x},\mathbf{x}') + \delta_{\alpha\beta}\Theta_\alpha^{\text{ind}}(\mathbf{x},\mathbf{x}'),\\
      &\Theta^{\text{com}}_{\alpha\beta}(\mathbf{x},\mathbf{x}') = \frac{\partial f(\mathbf{w}, \mathbf{u}_\alpha,\mathbf{x})}{\partial \mathbf{w}} \frac{\partial f(\mathbf{w}, \mathbf{u}_\beta,\mathbf{x}')}{\partial \mathbf{w}},\\
      &\Theta^{\text{ind}}_\alpha(\mathbf{x},\mathbf{\mathbf{x}}') = \frac{\partial f(\mathbf{w}, \mathbf{u}_\alpha,\mathbf{x})}{\partial \widetilde{\mathbf{u}}_\alpha} \frac{\partial f(\mathbf{w}, \mathbf{u}_\alpha,\mathbf{x}')}{\partial \widetilde{\mathbf{u}}_\alpha}.
    \end{split}
  \end{equation}
  In the definition of $\Theta^{\text{ind}}$ above, $\widetilde{\mathbf{u}}_\alpha$ denotes the subset of trainable individual parameters. In particular, it means that for MC dropout ensembles $\Theta^{\text{ind}}_\alpha(\mathbf{x},\mathbf{x}')=0$ since all individual weights $\mathbf{u}_\alpha$ are nontrainable.  

  We see from~\eqref{eq:EE_outputs_dynamic} that if the NTK vanishes between different ensemble models $\alpha\ne\beta$ (i.e. $\Theta_{\alpha\beta}(\mathbf{x},\mathbf{x}')\equiv0$), then the model $\beta$ does not contribute to the update of the model $\alpha$, i.e. we have dynamic independence.  

  We formulate now general results on the wide network limit and model independence for BatchEnsembles and MC dropout ensembles. We consider architecture given by~\eqref{eq:batch_layer} with $L$ hidden layers and common parameters at layer $W^l_{ij}$ initialised as i.i.d. Gaussians $\mathcal{N}(0, 1)$. In these architectures we allow parameters $u^l_\alpha$, $v^l_\alpha$ to be absent at some layers of the network, and only require them to be i.i.d. random variables which do not vanish identically and with finite variances. Then we take the limit of infinite reference network layer widths $N_l\rightarrow \infty$ sequentially starting from the layer $l=1$. In this setting we describe the network outputs $f_\alpha(\mathbf{x})$ and ensemble NTK~\eqref{eq:EE_NTK0} at initialization with ensemble analogues of statements (A1, A2). After that we assert that the vanishing mean of the last layer modulations $U_1^L\equiv\mathbb{E}[u^L_{\alpha j}]$ or $U_1^L\equiv\mathbb{E}[t^L_{\alpha j}]$ is a sufficient condition of ensemble model independence. Moreover, to a certain extent, this condition is also necessary if the activation belongs to the family $\mathcal{S}$ of non-negative non-decreasing locally Lipschitz functions (which include ReLU and the standard sigmoid). Our main results are collected in the following two theorems.

  \begin{ther}[\textbf{Outputs $f_\alpha$}]
  \label{theorem_1}\hfill
  
  \vspace{-4mm}
  \begin{enumerate}
  \setlength\itemsep{-0.5mm}
    \item \textbf{(Gaussianity)} Consider a BatchEnsemble or MC dropout ensemble at initialization. Then in the sequential infinite-width limit $N_l\rightarrow\infty$ the collection of ensemble model outputs $f_\alpha(\mathbf{x})$ converge (in law) to a zero mean Gaussian Process (GP). 
    \item \textbf{(Independence)} If $U_1^L=0$, then the GP covariance $\mathbb{E} [f_\alpha(\mathbf{x})f_\beta(\mathbf{x}')]=0$ for all $\mathbf x,\mathbf x'$ and different ensemble models $\alpha\ne\beta$. 
    \item \textbf{(Breakdown of independence)}  Let the activation function $\phi \in \mathcal{S}$ and $U_1^L\ne 0$. Then $\mathbb{E} [f_\alpha(\mathbf{x})f_\beta(\mathbf{x}')]>0$ for all $\alpha\ne\beta$ and all (resp., all linearly independent) pairs of non-zero inputs $\mathbf{x},\mathbf{x}'$ for network depths $L>1$ (resp., $L=1$). 
    \end{enumerate}
  \end{ther}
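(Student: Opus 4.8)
The plan is to analyze the infinite-width limit layer by layer, tracking the joint law of the pre-activations $z^l_{\alpha j}$ across ensemble models $\alpha$ and the induced covariance recursion. The backbone is the standard GP-limit argument for a single wide network: conditioned on the previous layer, each pre-activation $z^l_{\alpha j}$ is a sum over $N_{l-1}$ i.i.d.-across-$i$ contributions $\frac1{\sqrt{N_{l-1}}} x^{l-1}_{\alpha i} W^l_{ij}$, so by a CLT (with a Lyapunov/Lindeberg check, using the finite variances and non-vanishing of the modulating weights assumed in the setup) the vector $(z^l_{\alpha j})_{\alpha,j}$ converges to a centered Gaussian with covariance determined by the empirical second moments of the previous layer's activations. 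Taking $N_l\to\infty$ sequentially, these empirical moments concentrate to deterministic limits, yielding the GP claim (part 1) with an explicit covariance recursion. The key new bookkeeping relative to the single-network case is that the shared weights $W^l_{ij}$ couple different models: the cross-model covariance $\Sigma^l_{\alpha\beta}(\mathbf x,\mathbf x') \equiv \mathbb E[z^l_{\alpha j}(\mathbf x)\,z^l_{\beta j}(\mathbf x')]$ satisfies a recursion of the form $\Sigma^{l}_{\alpha\beta}(\mathbf x,\mathbf x') = \mathbb E_{v}\big[\, \mathbb E_{u}[u^{l-1}_{\alpha}u^{l-1}_{\beta}]\cdot \mathbb E\big[\phi(v^{l-1}_\alpha z)\phi(v^{l-1}_\beta z')\big]\big]$ (schematically), where the inner expectation is over the Gaussian pair $(z,z')$ with covariance $\Sigma^{l-1}$, and the outer expectations are over the independent modulating weights at layer $l-1$.

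For part 2, observe that the last layer's post-activation modulation $u^L_{\alpha j}$ (or $v^L$, depending on which case) enters the output $f_\alpha(\mathbf x)$ multiplicatively and is independent across $\alpha$ and independent of everything at layers $\le L-1$. When we compute $\mathbb E[f_\alpha(\mathbf x)f_\beta(\mathbf x')]$ for $\alpha\ne\beta$, the cross term factors as a product containing $\mathbb E[u^L_{\alpha j}]\,\mathbb E[u^L_{\beta j}] = (U_1^L)^2$ (by independence of the two models' last-layer modulations), times a nonnegative-in-general kernel factor. If $U_1^L=0$ this prefactor vanishes, killing the cross-covariance identically; this is essentially a one-line consequence of the recursion once it is set up, so the main work is front-loaded into establishing the recursion cleanly and isolating exactly where the last-layer mean appears.

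For part 3 (breakdown), take $U_1^L\ne 0$ and $\phi\in\mathcal S$. Now the $(U_1^L)^2$ prefactor is strictly positive, so strict positivity of $\mathbb E[f_\alpha f_\beta]$ reduces to strict positivity of the remaining kernel factor, which is built from nested expectations of $\phi$ against Gaussians with the covariances $\Sigma^l$. The plan is induction on depth: for $L=1$ one has essentially $\mathbb E[\phi(c\,\langle \text{lin}(\mathbf x)\rangle)\,\phi(c\,\langle\text{lin}(\mathbf x')\rangle)]$ type expressions where $\phi\ge 0$ and non-decreasing forces strict positivity as long as $\mathbf x,\mathbf x'$ are not forced to opposite half-spaces — which is why the $L=1$ statement is restricted to linearly independent inputs. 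For $L>1$, after one hidden layer the relevant Gaussian pair is genuinely nondegenerate with strictly positive individual variances for any nonzero input (the variance recursion never collapses since the modulating weights don't vanish identically), and $\phi\ge0$, $\phi$ non-decreasing, $\phi$ locally Lipschitz (hence non-constant where it matters) give $\mathbb E[\phi(Z)\phi(Z')]>0$ for such pairs; feeding this forward through the recursion preserves strict positivity. The delicate point, and the one I expect to be the main obstacle, is the $L=1$ borderline and the precise characterization of "linearly independent": I would handle it by writing the two arguments of $\phi$ as jointly Gaussian, noting that linear independence of $\mathbf x,\mathbf x'$ prevents the degenerate case where one argument is an a.s.-negative multiple of the other, and then invoking that a non-negative non-decreasing non-a.s.-zero function has strictly positive expectation against any Gaussian whose support meets $\{\phi>0\}$ with positive probability — together with a positive-correlation-type inequality (e.g. FKG/Chebyshev's sum inequality for monotone functions) to rule out cancellation between the two factors.
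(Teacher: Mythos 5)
Your treatment of parts 1 and 2 follows the paper's route essentially verbatim: a layer-wise CLT giving a centered GP with the covariance recursion $\Sigma^{l+1}_{\alpha\beta}=(U_1^l)^2\,\Phi(\mathbf{\Sigma}^l_{\alpha\beta},v^l_\alpha,v^l_\beta)$ for $\alpha\ne\beta$ (versus $U_2^l$ on the model diagonal), after which independence is the one-line observation that the last-layer cross-covariance carries the prefactor $(U_1^L)^2$. No issues there.

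Part 3 contains a genuine gap. For $L>1$ you assert that the relevant Gaussian pair is ``genuinely nondegenerate with strictly positive individual variances'' and that this, together with $\phi\ge 0$ non-decreasing, forces $\mathbb E[\phi(Z)\phi(Z')]>0$. Neither half of this is right as stated. First, nondegeneracy can fail for $L>1$: if the modulations are deterministic (which the setup permits --- that is exactly how absent modulations are modelled) and $\mathbf x=\mathbf x'$, the pair is perfectly positively correlated, hence rank one. Second, and more importantly, positive marginal variances alone do not give positivity: for ReLU and $Z'=-Z$ one has $\phi(Z)\phi(Z')\equiv 0$. So some control on the \emph{sign} of the correlation is indispensable, and your sketch supplies it only in the $L=1$ discussion, while for $L>1$ you attribute the conclusion to a nondegeneracy that is neither needed nor established. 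The mechanism the paper uses --- and the one you need --- is that for $L\ge 2$ the cross-model covariance $\Sigma^L_{\alpha\beta}(\mathbf x,\mathbf x')$ is \emph{automatically nonnegative}, being $(U_1^{L-1})^2$ times an expectation of a product of nonnegative values of $\phi$; a centered bivariate Gaussian with positive variances and nonnegative correlation charges the quadrant where $\phi(\cdot)\phi(\cdot)>0$ (after restricting to the positive-probability event $v^L_\alpha v^L_\beta>0$, which you also do not address), and positivity follows with no rank assumption. This sign observation is precisely what makes depth $\ge 2$ remove the linear-independence caveat, so it cannot be waved through. Your $L=1$ argument (linear independence $\Rightarrow$ full-rank Gaussian $\Rightarrow$ positive mass on the quadrant) is fine, and the FKG/Chebyshev inequality is not actually needed there. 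Stylistically, the paper argues the contrapositive: it classifies exactly when $\mathbb E[\phi(Y_1)\phi(Y_2)]$ can vanish for a centered Gaussian pair (a vanishing component, or degenerate support with negative slope) and excludes both cases --- a write-up you may find cleaner once the nonnegativity of $\Sigma^L_{\alpha\beta}$ is in hand.
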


  \pagebreak

  \begin{ther}[\textbf{NTK $\Theta_{\alpha\beta}$}]\hfill
  \label{theorem_2}
  \vspace{-4mm}
  \begin{enumerate}
  \setlength\itemsep{-0.5mm}
    \item \textbf{(Determinacy)} Consider a BatchEnsemble or MC dropout ensemble at initialization. Then in the sequential limit $N_l\rightarrow\infty$ the ensemble NTK $\Theta_{\alpha\beta}(\mathbf{x},\mathbf{x}')$ converges to a deterministic value. 
    \item \textbf{(Dynamic independence)} If $U_1^L=0$, then the ensemble NTK $\Theta_{\alpha\beta}(\mathbf{x},\mathbf{x}') = 0$ for all $\mathbf x, \mathbf x'$ and different ensemble models $\alpha\ne\beta$. 
    \item \textbf{(Breakdown of dynamic independence)} If $U_1^L\ne0$ and the activation function $\phi \in \mathcal{S}$, then $\Theta_{\alpha\beta}(\mathbf{x},\mathbf{x}')>0$ for all $\alpha\ne \beta$ and all pairs of inputs $\mathbf{x},\mathbf{x}'$ with  scalar product $\mathbf{x}^T\mathbf{x}'>0$. 
\end{enumerate}
  \end{ther}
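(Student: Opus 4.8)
\emph{Overview and Part 1 (determinacy).} My plan is to reduce all three claims to the recursive, layer-by-layer structure of the NTK, adapted to the modulated architecture~\eqref{eq:batch_layer}. For Part~1 I would follow the now-standard sequential-width argument of~\cite{jacot2018neural,lee2019wide,Arora2019OnEC}: taking $N_l\to\infty$ layer by layer, one shows by induction that, conditionally on the earlier layers, the pre-activations $z^l_{\alpha j}$ at layer $l$ become, over the neuron index $j$, i.i.d.\ vectors that are jointly Gaussian in the model index $\alpha$, with covariance $\Sigma^{(l)}_{\alpha\beta}(\mathbf x,\mathbf x')$ obtained by averaging the post-activation products $x^{l-1}_{\alpha i}x^{l-1}_{\beta i}$; since the modulations $u^{l-1}_{\alpha j},v^{l-1}_{\alpha j}$ are i.i.d.\ with finite variance, these averages concentrate by the law of large numbers and give a deterministic forward recursion. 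The backward pass is treated the same way (the usual gradient-independence argument replaces the transposed weights by fresh Gaussians), giving a deterministic recursion for the gradient kernels, and assembling forward times backward shows each entry $\Theta_{\alpha\beta}$ converges; the block decomposition~\eqref{eq:EE_NTK0} is immediate because $f_\alpha$ does not depend on $\mathbf u_\beta$ when $\beta\neq\alpha$, so the cross blocks only receive contributions from the shared $\mathbf w$. This step is essentially bookkeeping on top of known arguments.

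\emph{Part 2 (dynamic independence, $U_1^L=0$).} The mechanism is that the output is read off the \emph{last modulated} layer: $f_\alpha(\mathbf x)\propto\sum_j x^L_{\alpha j}W^{L+1}_j$ with $x^L_{\alpha j}=u^L_{\alpha j}\,\phi(v^L_{\alpha j}z^L_{\alpha j})$, so the factor $u^L_{\alpha j}$ sits inside $\partial f_\alpha/\partial W^{L+1}$, inside every $\partial f_\alpha/\partial z^L_{\alpha j}$, and hence, by backpropagation through the shared weights, inside every $\partial f_\alpha/\partial W^l$ and $\partial f_\alpha/\partial b^l$. Consequently each contribution to $\Theta^{\mathrm{com}}_{\alpha\beta}$ with $\alpha\neq\beta$ is, in the wide limit, an expectation carrying a factor $u^L_{\alpha j}u^L_{\beta j'}$; for $\alpha\neq\beta$ these two modulating weights are mutually independent and independent of all other randomness, so the expectation factorizes and produces $\mathbb E[u^L_{\alpha j}]\mathbb E[u^L_{\beta j'}]=(U^L_1)^2=0$. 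Equivalently, in the limiting recursion every summand of $\Theta^{\mathrm{com}}_{\alpha\beta}$ is a product of forward kernels $\Sigma^{(l')}_{\alpha\beta}$ and derivative kernels $\dot\Sigma^{(l')}_{\alpha\beta}$, and whichever of these carries the top layer ($\Sigma^{(L)}_{\alpha\beta}$ or $\dot\Sigma^{(L)}_{\alpha\beta}$) contains $(U^L_1)^2$. Hence $\Theta^{\mathrm{com}}_{\alpha\beta}\equiv0$, and~\eqref{eq:EE_NTK0} gives $\Theta_{\alpha\beta}\equiv0$ for $\alpha\neq\beta$. (If the last modulation is $v$ rather than $u$, the same argument applies with $v^L$ in place of $u^L$.)

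\emph{Part 3 (breakdown, $U_1^L\neq0$, $\phi\in\mathcal S$, $\mathbf x^T\mathbf x'>0$).} Here $\Theta_{\alpha\beta}=\tfrac{\gamma(M)}{M}\Theta^{\mathrm{com}}_{\alpha\beta}$ and $\Theta^{\mathrm{com}}_{\alpha\beta}$ is a sum of terms $\Sigma^{(l-1)}_{\alpha\beta}\prod_{l'=l}^{L}\dot\Sigma^{(l')}_{\alpha\beta}$ (from the weight matrices, $l=1,\dots,L+1$, the empty product at $l=L+1$ being $\Sigma^{(L)}_{\alpha\beta}$) plus analogous products $\prod_{l'=l}^{L}\dot\Sigma^{(l')}_{\alpha\beta}$ from the biases, so it suffices to show (i) every summand is $\ge0$ and (ii) the top summand $\Sigma^{(L)}_{\alpha\beta}$ is $>0$. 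For (i): $\Sigma^{(0)}_{\alpha\beta}\propto\mathbf x^T\mathbf x'\ge0$ by hypothesis, while for $l'\ge1$ the cross forward kernel equals $(U^{l'}_1)^2\,\mathbb E[\phi(v^{l'}_\alpha z^{l'}_\alpha)\phi(v^{l'}_\beta z^{l'}_\beta)]\ge0$ since $\phi\ge0$. The derivative kernels, which for the cross term read $(U^{l'}_1)^2\,\mathbb E_{v,z}[v_\alpha\phi'(v_\alpha z^{l'}_\alpha)\,v_\beta\phi'(v_\beta z^{l'}_\beta)]$, are not manifestly sign-definite; here I would invoke Price's theorem: writing $v_\alpha\phi'(v_\alpha t)=\partial_t\phi(v_\alpha t)$ gives $\mathbb E_z[v_\alpha\phi'(v_\alpha z^{l'}_\alpha)v_\beta\phi'(v_\beta z^{l'}_\beta)]=\partial_c\mathbb E_z[\phi(v_\alpha z^{l'}_\alpha)\phi(v_\beta z^{l'}_\beta)]$ with $c=\operatorname{Cov}(z^{l'}_\alpha,z^{l'}_\beta)$, i.e.\ $\dot\Sigma^{(l')}_{\alpha\beta}$ is the derivative of the non-negative post-activation kernel with respect to the pre-activation cross-covariance; since $\phi$ is non-decreasing, $\mathbb E_z[\phi(az^{l'}_\alpha)\phi(bz^{l'}_\beta)]$ is monotone in $\operatorname{Cov}(az^{l'}_\alpha,bz^{l'}_\beta)=abc$, and pairing this monotonicity with the sign of $ab$ and then averaging over the modulations $v$ yields $\dot\Sigma^{(l')}_{\alpha\beta}\ge0$ (for $l'=1$ the two Gaussians are the shared pre-activation $z^1$ at the two inputs, of covariance $\propto\mathbf x^T\mathbf x'\ge0$). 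For (ii): $\Sigma^{(L)}_{\alpha\beta}$ equals the GP covariance $\mathbb E[f_\alpha(\mathbf x)f_\beta(\mathbf x')]$, which is $>0$ by Theorem~\ref{theorem_1}(3) when $L>1$; for $L=1$ the same elementary computation gives $>0$ whenever $\mathbf x^T\mathbf x'>0$, since then $z^1(\mathbf x),z^1(\mathbf x')$ are positively correlated and $\phi,v^1$ are non-trivial. A short non-degeneracy lemma (the diagonal variances $\Sigma^{(l)}_{\alpha\alpha}(\mathbf x,\mathbf x)$ stay strictly positive for $\mathbf x\neq0$) is also needed. Combining, $\Theta^{\mathrm{com}}_{\alpha\beta}\ge\Sigma^{(L)}_{\alpha\beta}>0$, hence $\Theta_{\alpha\beta}>0$.

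\emph{Main obstacle.} I expect the crux to be step~(i) for the derivative kernels: the modulated quantity $v\phi'(vz)$ has the sign of $v$, so the naive non-negativity argument fails, and the Price-theorem-plus-monotonicity route — which is exactly where the assumption $\phi\in\mathcal S$ (non-negative \emph{and} non-decreasing) enters essentially — is the non-routine ingredient, along with keeping the monotonicity-in-covariance statement robust under the averaging over $v$. The remaining work is largely technical: justifying gradient independence for the modulated architecture in Part~1, treating the merely locally-Lipschitz $\phi$ (mollify, apply Price's theorem, pass to the limit), and the non-degeneracy lemma.
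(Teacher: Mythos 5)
Your overall route coincides with the paper's. Parts 1 and 2 are exactly the content of Proposition~\ref{prop:2}: a sequential-width induction giving a deterministic layer-wise recursion for $\Theta^{l}_{\alpha\beta}$, with the cross-model block at the top layer proportional to $(U_1^L)^2$ (cf.~\eqref{eq:model_offdiag_NTK_propagation}), which vanishes when $U_1^L=0$. Part 3 is likewise the paper's argument: unroll the recursion into a sum of products of forward and derivative kernels (this is~\eqref{eq:last_layer_NTK_decomp}), show every summand is non-negative using $\phi\ge 0$, $\dot\phi\ge 0$ and $\mathbf x^T\mathbf x'\ge 0$, and anchor strict positivity on the output covariance term, which is positive by Theorem~\ref{theorem_1}(3). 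Your off-by-one in calling that anchor $\Sigma^{(L)}_{\alpha\beta}$ rather than $\Sigma^{L+1}_{\alpha\beta}$ is harmless, and your handling of $L=1$ under $\mathbf x^T\mathbf x'>0$ matches the paper's remark that Theorem~\ref{theorem_1}(3) is proved under a weaker input condition.

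The one substantive divergence is the step you yourself single out as the crux: non-negativity of the cross-model derivative kernels. In Proposition~\ref{prop:2} the pre-activation modulations appear as a separate scalar factor $(V_1^l)^2\ge 0$ multiplying $\dot\Phi=\langle\dot\phi(v_\alpha z_1)\dot\phi(v_\beta z_2)\rangle$, and the latter is non-negative for the trivial reason that $\dot\phi\ge 0$ pointwise when $\phi$ is non-decreasing; no Price's theorem enters anywhere in the paper. You instead keep the modulations inside the expectation and study $\mathbb E\bigl[v_\alpha\dot\phi(v_\alpha z_1)\,v_\beta\dot\phi(v_\beta z_2)\bigr]$ (which is indeed the quantity produced by a literal chain-rule computation before any factorization). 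But your proposed resolution is not closed: Price's theorem gives, conditionally on $(v_\alpha,v_\beta)$, a quantity of the form $v_\alpha v_\beta\,H'(v_\alpha v_\beta c)$ with $H'\ge 0$, whose sign is that of $v_\alpha v_\beta$; averaging over independent $v_\alpha,v_\beta$ does not yield non-negativity from monotonicity of $H$ alone — one would need an additional structural input such as convexity of $H$ in the covariance or a symmetry assumption on the law of $v$. So as sketched, step (i) for the derivative kernels has a genuine gap. The shortest repair is to use the factorized form of the recursion as stated in Proposition~\ref{prop:2} (where $(V_1^l)^2$ and $\dot\Phi$ are separate non-negative factors), at which point the entire Price's-theorem machinery — and the "main obstacle" you identify — disappears.
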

  
  Note that we establish the breakdown of dynamic independence only in the sense that  $\Theta_{\alpha\beta}(\mathbf{x},\mathbf{x}')\not\equiv 0$ for $\alpha\ne \beta$ (not excluding in principle that the terms of the sum
  in equation~\eqref{eq:EE_outputs_dynamic} cancel out so that the orthogonality $\frac{\partial f_\alpha(\mathbf{x})}{\partial \mathbf{w}} \frac{\partial \mathcal{L}_\beta}{\partial \mathbf{w}}=0$ still holds).

  The proofs of the theorems are given in Section~\ref{sec:thm_proofs}. The key element of the proofs are recursive formulas for GP covariances and NTK's in different layers of the network (see Propositions~\ref{prop:1} and~\ref{prop:2} in Section~\ref{sec:SM_detailed_results}). These recursive formulas reveal a special role of post-activation modulations $u_\alpha^l$, and in particular the last layer modulations $u_\alpha^L$, for model independence. Heuristically, the role of the last layer $L$ is crucial because even if the model independence is present at earlier layers, it may get broken as the signal propagates to the network output.   
  The characterization of model independence provided by Theorems~\ref{theorem_1} and~\ref{theorem_2} only concerns the initial ($t=0$) states of the network. We additionally need the constancy of the NTK during training for the full result: in this case the dynamic equations~\eqref{eq:EE_outputs_dynamic} on $f_\alpha(\mathbf{x}, t)$  become closed and model independent, which ensures the independence of the trained model outputs $f_\alpha(\mathbf{x}, t>0)$ given their independence at $t=0$. In Section~\ref{sec:ntk_stat_proof} we show the constancy of the NTK during training using the approach from~\cite{Dyer2020AsymptoticsOW}.

  \begin{figure*}[tb]
    \centering
    \setlength{\fboxsep}{0pt}
    {\includegraphics[scale=0.55, trim=4mm 40mm 0mm 4mm, clip]{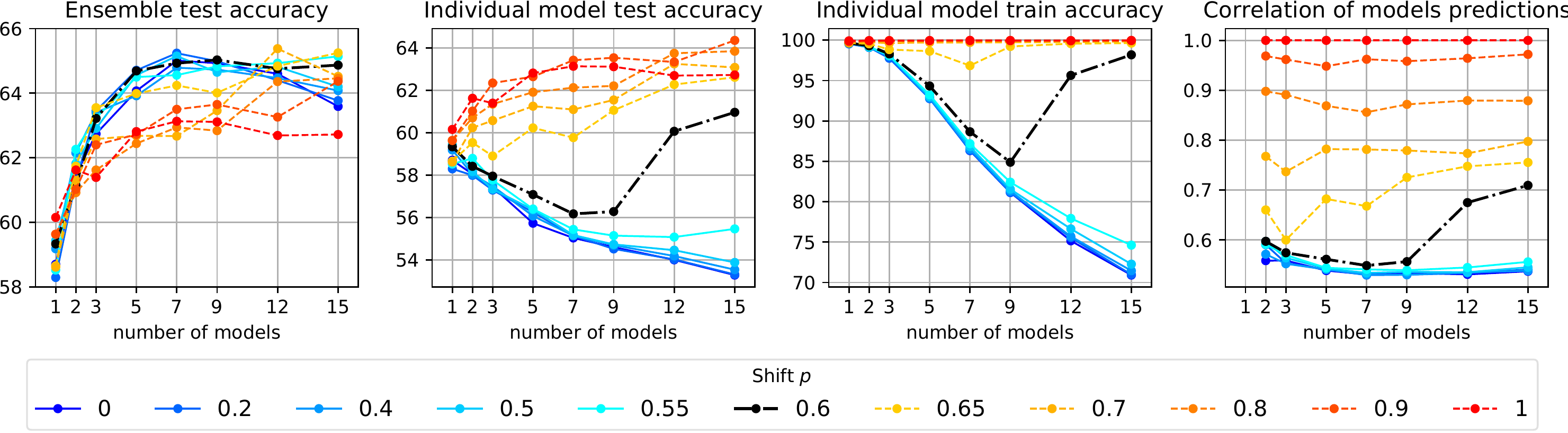}}
    \caption{BatchEnsemble with all modulations having the shifted initialization $\mathcal{N}(p, 1-p^2)$. \textbf{From left to right:} Accuracy of ensemble predictions on test set, mean accuracy of individual EE models on test set, mean accuracy of individual EE models on train set, mean binary correlation of class predictions of individual EE models on test set. We observe a transition point at $p\approx 0.6$, with ensembles below/above this point being in the independent/collective regime.
    }
    \label{fig:shifting_modulations}
  \end{figure*}

  \begin{figure*}[tb]
    \centering
    \begin{picture}(487,115)
    \put(0,0){\includegraphics[scale=0.28, trim=80mm 2mm 75mm 3mm, clip]{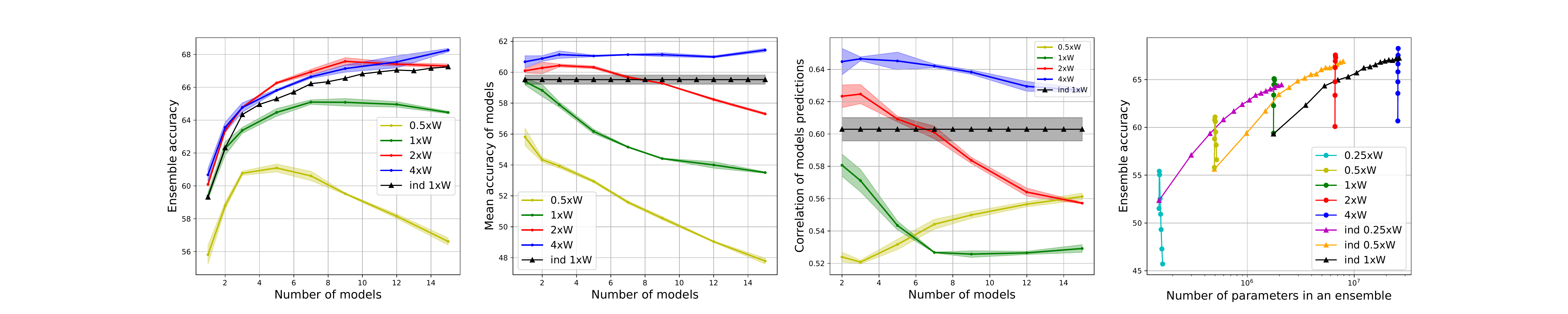}}
    \put(55,110){(a)}
    \put(175,110){(b)}
    \put(300,110){(c)}
    \put(415,110){(d)}
    \end{picture}
    
    \caption{Performance of BatchEnsembles in independent regime and usual independent ensembles for different model widths and numbers of models. Each curve corresponds to a particular model width. The respective value ($0.25\times$ -- $4\times$) is the number of neurons in each layer relative to the baseline network. \textbf{(a), (b), (c):} Same metrics as in the first, second and forth plots of Figure~\ref{fig:shifting_modulations}. \textbf{(d):} Scatter plot comparing various ensembles with respect to accuracy and the absolute number of parameters.}
  \label{fig:many_widths}
  \end{figure*}
 
  \begin{figure*}[tb]
    \centering
    \includegraphics[scale=0.6, trim=3mm 4mm 3mm 3mm, clip]{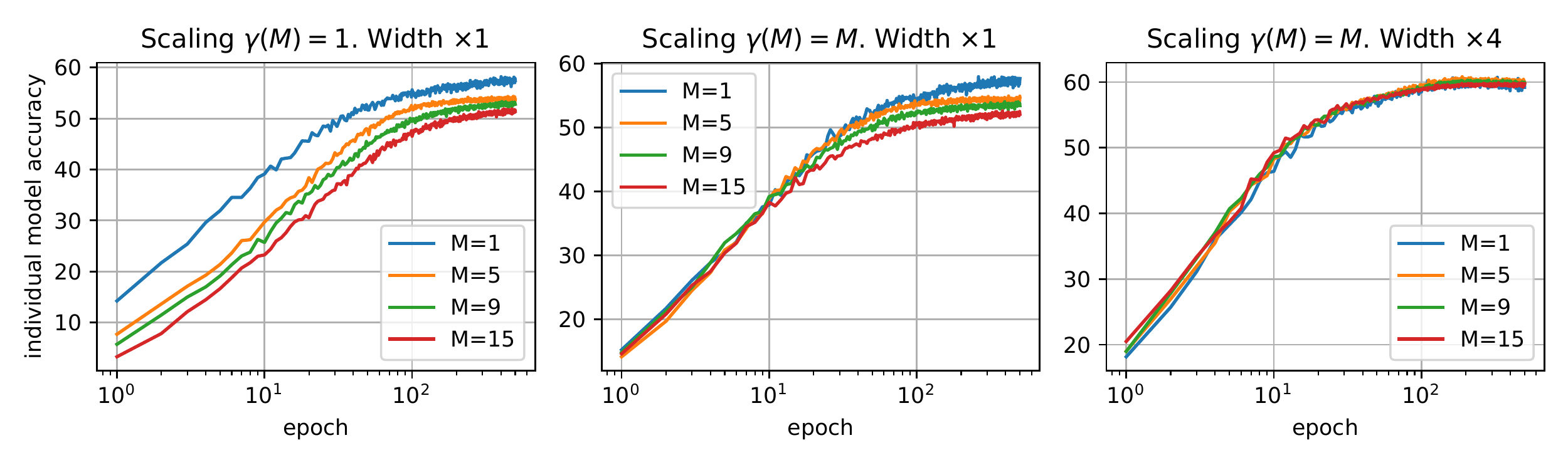}
    \caption{Learning trajectories of individual members of BatchEnsembles, for ensembles of different sizes $M$. The modulations are centered ($\mathbb{E}[u^l_{\alpha}]=0, \mathbb{E}[v^l_{\alpha}]=0$ for all $\alpha,l$). \textbf{Left:} Standard width, learning rate scaled by $\gamma(M)=1$ in equation~\eqref{eq:common_grads}. \textbf{Center:} Standard width, learning rate scaled by $\gamma(M)=M$. \textbf{Right:} Width $\times 4$, learning rate scaled by $\gamma(M)=M$.}
    \label{fig:trajectories_log}
  \end{figure*}
  
  \begin{figure*}[tb]
    \centering
    \setlength{\fboxsep}{0pt}
    {\includegraphics[scale=0.55, trim=3mm 40mm 2mm 4mm, clip]{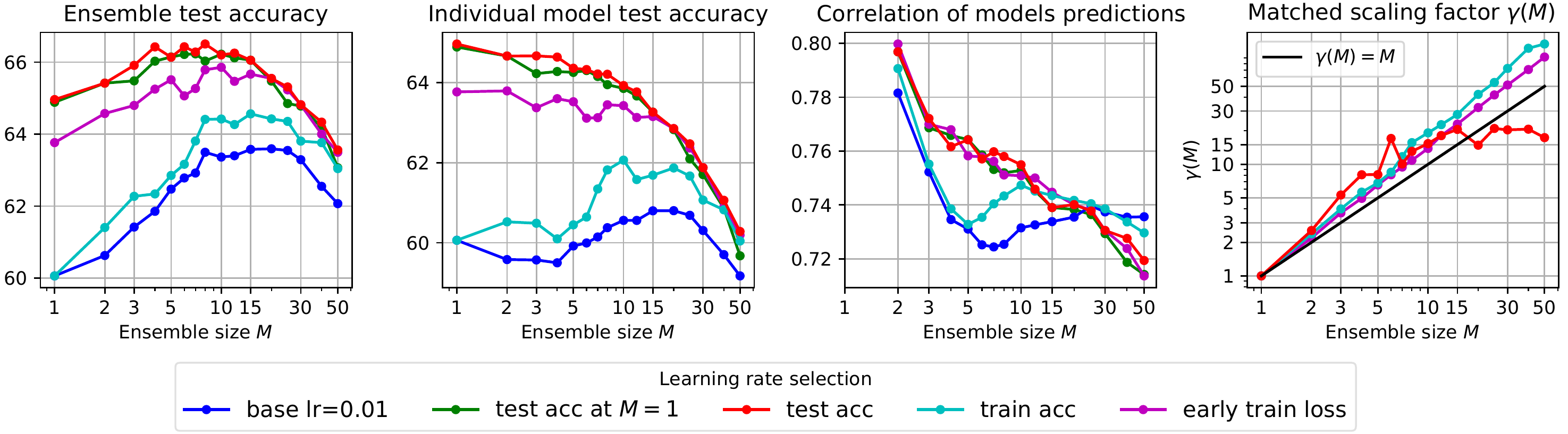}}
    \caption{Performance of LLD ensemble with learning rate $\eta(M)$ at each ensemble size $M$ chosen according to certain criterion specified in the legend. \emph{Blue} lines: constant $\eta(M)=0.01$. \emph{Green}:  constant $\eta(M)$ maximizing test accuracy of single reference model. \emph{Red} and \emph{cyan}: optimal test and train accuracy at the end of training. \emph{Purple}: optimal train loss at early epoch $n=10$. In particular, blue and green lines effectively use simple gradient scaling $\gamma(M)=M$ while the rest of the lines effectively use ``experimental'' scaling $\gamma(M)=M\frac{\eta(M)}{\eta(1)}$, depicted on the right plot.}
    \label{fig:dropout_last}
  \end{figure*}

  \begin{figure*}[tb]
    \centering
    \setlength{\fboxsep}{0pt}
    \includegraphics[scale=0.6, trim=3mm 5mm 2mm 4mm, clip]{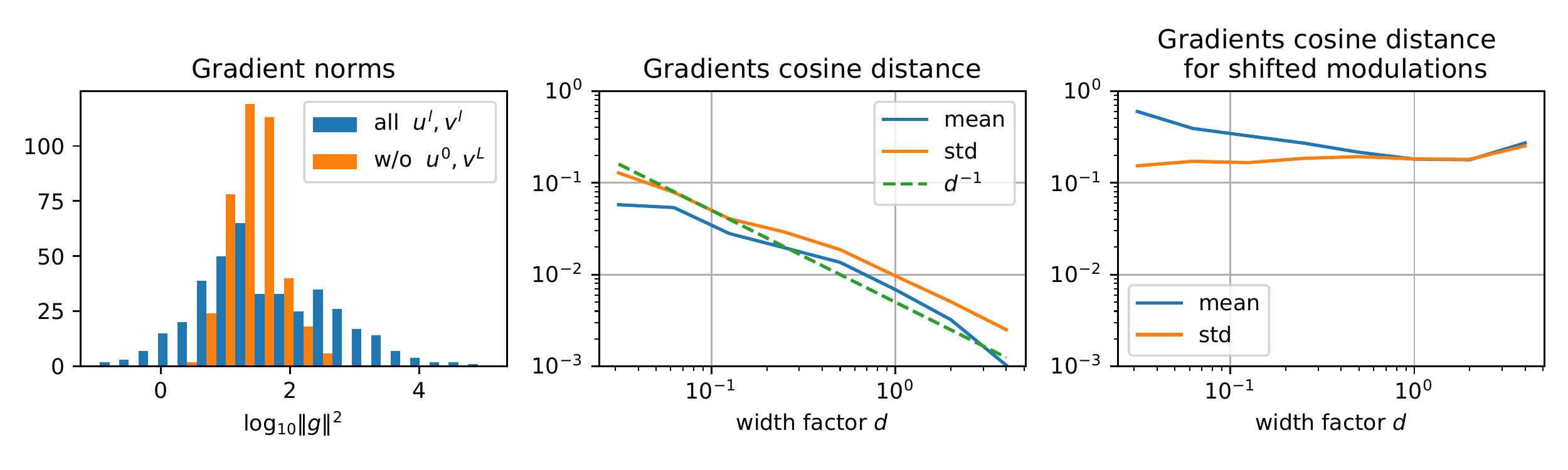}
    \caption{\textbf{Left:} Gradient magnitudes with or without modulations in the initial and final layers. \textbf{Center:} Mean cosine distance $\tfrac{|\langle\nabla\mathcal L_\alpha,\nabla\mathcal L_\beta\rangle|}{\|\nabla\mathcal L_\alpha\|\cdot\|\nabla\mathcal L_\beta\|}$ between common parameters gradients corresponding to different ensemble models, for \emph{centered} modulations ($\mathbb{E}[u^l_{\alpha}]=0, \mathbb{E}[v^l_{\alpha}]=0$ for all $\alpha,l$). \textbf{Right:} Mean cosine distance between model gradients for \emph{shifted} modulations ($\mathbb{E}[u^l_{\alpha}]\ne 0, \mathbb{E}[v^l_{\alpha}]\ne 0$ for all $\alpha,l$).}
  \label{fig:Gradients_at_init}
  \end{figure*}

\subsection{Two operating regimes of embedded ensembles}\label{sec:regimes}
  Theorems~\ref{theorem_1} and~\ref{theorem_2} show that for certain choices of reference network architecture and parameter initialization the covariance and NTK of infinitely wide EEs are model diagonal, which ensures independence of ensemble models. We refer to such EEs as operating in the \textit{independent regime}. If at least some elements of the NTK $\Theta_{\alpha\beta}(\mathbf{x},\mathbf{x}')$ have a non-vanishing value at $\alpha\ne\beta$, we refer to the corresponding EE as operating in the \textit{collective regime}. Examples of EEs in the independent regime are BatchEnsemble with all modulations ${u^l_\alpha,v^l_\alpha}$ initialised as standard Gaussians and MC dropout ensemble with either standard Gaussian or symmetric $\{-1,0,1\}$ masks. On the other hand, we expect BatchEnsembles with only pre-activation modulations $v^l_\alpha$ or MC dropout ensembles with Bernoulli $\{0,1\}$ masks to be in the collective regime.    

  In the \emph{independent regime} the dynamics of any ensemble model $f_\alpha(\mathbf{x},t)$ is completely identical to that of isolated reference model $f(\mathbf{x},t)$ (i.e., the trivial ensemble with $M=1$) if the gradient scaling factor $\gamma(M)=M$. This can be seen from~\eqref{eq:EE_NTK0} and the fact that  covariance $\mathbb{E}[f_\alpha(\mathbf{x})f_\alpha(\mathbf{x}')]$ and NTK $\Theta_{\alpha\alpha}(\mathbf{x},\mathbf{x}')$ of model $\alpha$ are trivially equal to their counterparts for isolated reference network. Thus EE in the independent regime is equivalent to the usual independent ensemble of reference networks. From a practical point of view, it also means that a hyperparameters choice effective for training a single reference network will remain effective for training embedded ensemble of any size $M$. 

  In the \textit{collective regime} we observe that non-zero off-diagonal elements $\Theta_{\alpha\beta}(\mathbf{x},\mathbf{x}')$ of the NTK are  the same for all pairs $(\alpha, \beta)$ of different models due to their equivalence in terms of parameter distributions at initialization. Thus, we can expect the sum over $\beta$ in~\eqref{eq:EE_outputs_dynamic} to scale as $M$ times the typical size of a single term in the sum. This suggests that the natural choice for gradients scaling is $\gamma(M)=1$, which keeps the model evolution $\frac{df_\alpha(\mathbf{x})}{dt}$ bounded for large ensemble sizes $M$. 

  Note that while distinction between {independent} and {collective} regimes is formulated for infinitely wide reference networks, we expect the behavior of finite size EEs to be significantly influenced by the operating regime of their infinite width counterparts. In this section \emph{independent} regime was defined in a narrow sense when interaction between ensemble models strictly zero. However, for finite networks this definition is meaningless, but we still observe two different behavior patterns (see Figure~\ref{fig:shifting_modulations}) which can be naturally identified with collective regime and \emph{independent} regime.

\section{Experiments}
\label{sec:analysis}
We present a series of experiments confirming our theory and further illuminating various properties of embedded ensembles. We perform all experiments on CIFAR100 using a simple 4-hidden-layer convolutional architecture  similar to VGG16~\cite{simonyan2014very}. All experiments except for those in Figures~\ref{fig:shifting_modulations} and~\ref{fig:dropout_last} involve BatchEnsemble with modulations present at all layers and initialized as standard Gaussians $\mathcal{N}(0,1)$. For simplicity, the gradient scaling factor was set to its theoretical value for independent ensemble $\gamma(M)=M$. We measure correlation between ensemble models as standard Pearson correlation coefficient between binarized outputs ``wrong class''/``correct class'', averaged over all model pairs $(\alpha\ne\beta)$. We find this version of correlations to be less dependent on model accuracy. See Section~\ref{sec:experiment_details} for further experimental details and Section~\ref{sec:add_exp} for additional experiments.

\paragraph{Transition between the independent and collective regimes.}\label{sec:transition between regimes}
Consider modulations in EE all initialized with the shifted and scaled Gaussian distribution $\mathcal{N}(p, 1-p^2)$. At $p=1$ the modulations are trivial and deterministic, $u_\alpha^l=v_\alpha^l=1$, and thus all EE submodels are identical, which is an extreme case of collective regime. At $p=0$, infinitely wide EEs are in the independent regime.  In Figure~\ref{fig:shifting_modulations} we gradually change the shift parameter $p$ and plot dependence of various metrics on ensemble size $M$. In this way we check whether the independent and collective regimes can be observed in finite width networks and study the transition between these regimes. 

We see two distinct groups of models corresponding to shift parameters $p \geq 0.65$ and $p \leq 0.55$ and clearly separated by all four metrics.  We identify these two groups as being in the collective and independent regime, respectively. The differences between regimes for each observed metric are as follows.
\textbf{1)} Ensemble accuracy in the collective regime has a relatively big variance across different $p$, but monotonically grows on average. In the independent regime the variance is smaller and its average has a distinctive ``inverted-U'' shape with best accuracy at $M=7$.
\textbf{2)} Mean accuracy of individual ensemble models on the test set increases (resp. decreases) with ensemble size in the collective (resp. independent) regime. Variance is again higher in the collective regime.
\textbf{3)} Mean accuracy of individual ensemble models on the training set is maximal for ensembles of all sizes in the collective regime, while decreasing monotonically and significantly in the independent regime. 
\textbf{4)} Correlation between prediction of different models within ensemble, as expected, grows with $p$ in the collective regime. In contrast, in the independent regime the correlation is almost constant for the whole $p$ range. 

All observations above confirm the following natural pictures of the two regimes. \textbf{Collective regime:} submodels in the ensemble can be viewed as a sample of models ``in a vicinity of one good model'' and thus having low diversity. This picture explains all observed features of the collective regime, except for the growth of accuracy of individual models on the test set with $M$. This exception is an artifact of our experimental setup with independent gradient scaling $\gamma(M)=M$ which results in an effectively higher learning rate for larger ensembles. \textbf{Independent regime:} submodels in the ensemble are relatively diverse, have a good generalization ability (small difference between train and test accuracy) and thus provide a significant accuracy gain when ensembled together. The low variance of all four metrics in the independent regime can be explained by averaging: the variance of a metric (e.g. mean test accuracy) is reduced by a factor of $M$ compared to that of isolated model, while in the collective regime we have an ``approximately single'' model and thus almost no variance reduction.

The transition between regimes is sharp: EEs with $p=0.55$ and $0.65$ already have a distinctive behaviour of the respective regimes. The EE at the transition point $p\approx 0.6$ exhibits ``critical'' behavior -- the transition  is driven not by the initialization parameter $p$ but by ensemble size $M$ and happens at $M\approx10$.   
  
\paragraph{Scaling of EE with the width of reference network.}
  In Figure~\ref{fig:many_widths} we show the performance of usual independent ensembles and BatchEnsembles in the individual regime with different numbers of models and rescaled network widths. We observe that the optimal ensemble size maximizing ensemble accuracy increases with network width. Such behavior agrees with our theoretical results (Theorems~\ref{theorem_1} and~\ref{theorem_2}) for the infinite network width limit, where ensemble models become truly independent and ensemble accuracy grows monotonically with $M$.   
  
  In Figure~\ref{fig:many_widths}-(d) we plot ensemble accuracy against the total number of its parameters and observe that BatchEnsembles with the optimal number of models slightly outperform independent ensembles at any fixed number of parameters.

\paragraph{Gradient scaling factor $\gamma(M)$.}
In Figure~\ref{fig:trajectories_log} we show learning trajectories of individual submodels of BatchEnsemble in the independent regime. By comparing the left and center subfigures, we see that it is indeed natural and beneficial to use the gradient scaling factor $\gamma(M)=M$ in the averaged gradient~\eqref{eq:common_grads}. With this choice, learning trajectories are almost identical in the early epochs for all $M$, implying that models train independently. According to the theory, the length of the initial time period of independent training should increase with model width, and we indeed observe this effect by comparing the right and central subfigures.

The full matching of trajectories of individual models $f_\alpha(\mathbf{x},t)$ and isolated reference model $f(\mathbf{x},t)$ can be achieved only in the fully independent regime (e.g. at infinite network width), as discussed in Section~\ref{sec:regimes}. On the other hand, for finite networks we can match ensembles of different sizes only based on specific criterion, e.g. maximal accuracy at the end of training, either on test or train test. Thus the resulting scaling $\gamma(M)$ generally depends on the criterion used for matching. In Figure~\ref{fig:dropout_last} we perform such matching by first training the EE at different learning rates $\eta$, then obtaining optimal learning rates $\eta^{(M)}$ according to a certain criterion, and finally calculating the gradient scaling as $\gamma(M)=M\frac{\eta^{(M)}}{\eta^{(1)}}$. The results are depicted in the last plot of Figure~\ref{fig:dropout_last}, where we see that corrections to simple scaling $\gamma(M)=M$ are moderate: $|\log(\frac{\gamma(M)}{M})| < \log 3$ for $M<50$.  

\paragraph{Last layer dropout (LLD) ensemble.}
LLD ensembles described in Section~\ref{sec:EE_description} have computation complexity of inference and backpropagation almost as small as a single model ($\sim 10\%$ increase in computation time for ensemble of size $M=50$). Accordingly, we were able to train ensembles of larger sizes $M$ and on a wide range of learning rates $\eta$ from $10^{-3}$ to $10^{1}$. This grid search allows to find optimal values of learning rate for each $M$, and effectively determine experimental values of gradient scaling factor $\gamma(M)$. The modulations in this experiment were standard Gaussian and thus we expect it to be in independent regime. The results are shown in Figure~\ref{fig:dropout_last}.

First, observe that optimizing learning rate for accuracy on test set (red line) leads to significant increase of ensemble accuracy compared to basic learning rate (blue line). However, simpler learning rate optimization for only single model (green line) gives almost optimal performance. For all learning rate selections we observe that ensemble accuracy of LLD has independent regime behaviour with optimal number of models $M\approx10$. However, the increase in accuracy $\sim 2\%$ at optimal size  is less than the respective increase $\sim 5\%$ for BatchEnsemble with modulations at all layers. Thus the presence of these modulations does improve model independence for finite width networks while being unnecessary for infinite width networks.         
  
\paragraph{Orthogonality of gradients for different submodels.}
In Figure~\ref{fig:Gradients_at_init} (center and right) we confirm our theoretical analysis of the presence of dynamical model independence (Theorem~\ref{theorem_2}) by checking the orthogonality of loss gradients for different submodels. We find the orthogonality to hold if the network is wide enough and the post-activation modulations are centered ($\mathbb{E}[u^l_{\alpha}]=0$); otherwise, the gradients are not orthogonal in general. In the centered case, both the average and standard deviation of the normalized scalar product  $\tfrac{|\langle\nabla\mathcal L_\alpha,\nabla\mathcal L_\beta\rangle|}{\|\nabla\mathcal L_\alpha\|\cdot\|\nabla\mathcal L_\beta\|}$ scales with the relative network width $d$ as $\sim d^{-1}$.

Also, we studied the effect of adding input modulations $u^0_\alpha$ and output modulations $v^{L+1}$ present in the original BatchEnsemble architecture~\cite{wen2019batchensemble}. In our experiments we find that it is considerably harder to train BatchEnsemble with these modulations and observe frequent occurrences of training divergence. In Figure~\ref{fig:Gradients_at_init} (left) we plot the histogram of gradient norms w.r.t. random initialization of network parameters. Several orders of magnitude increase of the gradients spread in the architecture with input and output modulations explains its unstable behaviour.  
  
\section{Conclusion}
\label{sec:conclusion}
  We have performed an extensive theoretical and empirical study of embedded ensembles. Our main finding is the identification of two quite distinct \emph{independent} and \emph{collective} training regimes in which the ensemble models are approximately independent of each other or rather look like perturbations of a single model, respectively. The existence of these regimes was proved theoretically in the infinite-width network limit and verified experimentally for finite networks. The two regimes have quite different properties: in particular, in the independent regime performance is maximized at a particular finite number of models in the ensemble while in the collective regime it increases indefinitely with the size of the ensemble. The optimal performance seems to be achieved by ensembles of optimal size in the independent regime. The optimal size increases with wider networks. Our theory also suggests a particular optimal scaling for the ensemble learning rate, which we have confirmed experimentally.
  
  Post-activation modulations in the last hidden layer play a special role in the ensemble; even if only these modulations are present, the ensemble shows all its characteristic properties. This special case is especially attractive for practical applications since in this case not only storage but also computation time is substantially reduced compared to the standard ensemble of independent models. 
  
  We remark that in this paper we have covered BatchEnsembles and (a version of) MC dropout ensembles, but not other embedded ensembles. Some of them require significant modifications of our theoretical approach. For example, our NTK-based theory assumes that the weights do not significantly change under training, which is not the case for MIMO ensembles~\cite{havasi2020training}, since some of their weights are expected to vanish as a result of training. We leave analysis of other embedded ensembles to a future work.

\section*{Acknowledgment}
Research was supported by Russian Science Foundation, grant 21-11-00373. The authors acknowledge the use of computational resources of the Skoltech CDISE supercomputer Zhores~\cite{Zhores2019} for obtaining the results presented in this paper.
 
\bibliography{refs}

\clearpage
\appendix

\thispagestyle{empty}

\onecolumn \makesupplementtitle

\section{Background on NTK}\label{sec:NTK_background}
\paragraph{Neural Tangent Kernel.} Consider a general parametric model $f(\mathbf{x},\mathbf{W})$ with inputs $\mathbf{x}$ and parameters $\mathbf{W}$. Also assume that parameters $\mathbf{W}$ are trained with continuous time gradient decent on a train set $\mathcal{D}=\{(\mathbf{x}_a,y_a\}_{a=1}^B$ of size $B$ and with a loss 
\begin{equation}
    \mathcal{L}(\mathbf{W})=\frac{1}{B}\sum_{(\mathbf{x},y)\in\mathcal{D}}L(f(\mathbf{x},\mathbf{W}), y).
\end{equation}
The evolution of model parameters $\mathbf{W}$ is then given by $    \frac{d}{dt}\mathbf{W}(t) = -\frac{\partial \mathcal{L}(\mathbf{W})}{\partial \mathbf{W}}$. This evolution induces the following evolution of network outputs  
\begin{align}\label{eq:param_model_output_evolution}
    \frac{d}{dt}f(\mathbf{x}, \mathbf{W}) &= -\frac{\partial f(\mathbf{x}, \mathbf{W})}{\partial \mathbf{W}}\frac{\partial \mathcal{L}(\mathbf{W})}{\partial \mathbf{W}} = - \frac{1}{B} \sum\limits_{b = 1}^B \frac{\partial f(\mathbf{x}, \mathbf{W})}{\partial \mathbf{W}} \frac{\partial f(\mathbf{x}_b, \mathbf{W})}{\partial \mathbf{W}} \frac{\partial L(f(\mathbf{x}_b), y_b)}{\partial f(\mathbf{x}_b)} \\
    \label{eq:NTK_model_output_evolution}
    &= -\frac{1}{B} \sum\limits_{b = 1}^B \Theta(\mathbf{x}, \mathbf{x}_b, \mathbf{W}_n) \frac{\partial L(f(\mathbf{x}_b), y_b)}{\partial f(\mathbf{x}_b)}.
\end{align}
Here the product of two partial derivatives w.r.t. model parameters is understood as the inner product of respective gradients. In the last equality we have replaced inner product of model output gradients with $\Theta(\mathbf{x},\mathbf{x}_a, \mathbf{W})$ -- the \textit{Neural Tangent Kernel (NTK)} of model $f(\mathbf{x},\mathbf{W})$ \cite{jacot2018neural}. The definition of NTK on an arbitrary pair of inputs is
\begin{equation}
    \Theta(\mathbf{x},\mathbf{x}',\mathbf{W}) \equiv \frac{\partial f(\mathbf{x}, \mathbf{W})}{\partial \mathbf{W}} \frac{\partial f(\mathbf{x}', \mathbf{W})}{\partial \mathbf{W}}.
\end{equation}
Note that we can write a similar equation for the evolution of model outputs in case of discrete gradient descent with learning rate $\eta$. However, it will be an approximate equation since it will rely on first order Taylor expansion of $f(\mathbf{x}, \mathbf{W} - \eta \nabla_\mathbf{W}\mathcal{L}(\mathbf{W}))$: 
\begin{equation}
    f(\mathbf{x}, \mathbf{W}_{n+1}) - f(\mathbf{x}, \mathbf{W}_{n}) = -\eta \frac{1}{B} \sum\limits_{b=1}^B \Theta(\mathbf{x},\mathbf{x}_b, \mathbf{W}_n) \frac{\partial L(f(\mathbf{x}_b), y_b)}{\partial f(\mathbf{x}_b)} + O(\eta^2).
\end{equation}
For this reason, in the theoretical part of the paper we consider only the continuous time gradient descent. In the remaining parts of Supplementary Material we will exclude $\mathbf{W}$ from the notation for model output $f(\mathbf{x})$ and NTK $\Theta(\mathbf{x}, \mathbf{x}')$ assuming they depend on model parameters implicitly. 

\paragraph{``Standard'' and ``NTK'' parametrizations.} Now consider a depth-$L$ fully-connected neural network parametrized as 
\begin{equation}\label{eq:fcc_layer}
    \begin{cases}
    &z^{l}_{j} = \frac{1}{\sqrt{N_{l-1}}} \sum\limits_{i = 1}^{N_{l - 1}} x_{i}^{l-1} w^{l}_{ij} + b^l_j, \\
    &x^{l}_{j} = \phi\bigl(z^l_{j}\bigr).
    \end{cases}
\end{equation}
Here $w^l_{ij}, b^l_j$ are model parameters; $N_l, \; l=1,\ldots,L,$ are the widths of hidden layers; $x^0_j$ is the $j$'th component of input $\mathbf{x}$ and $z^{L+1}_j$ is the $j$'th component of network output $f(\mathbf{x})$. An important feature of parametrization~\eqref{eq:fcc_layer} is the explicit dependence on the layer widths $N_l$ in the network forward pass, while the weights are initialized as standard Gaussian $w^l_{ij} \sim \mathcal{N}(0,1)$. Such initialization is equivalent to the ``standard'' one (no explicit dependence on $N_l$ in the network forward pass but weights are initialized as $w^l_{ij}\sim \mathcal{N}(0,1/N_{l-1})$) in terms of distribution of outputs at initialization, but leads to different gradient descent dynamic. In particular, for the parametrization~\eqref{eq:fcc_layer} in the infinite width limit $N_l\rightarrow \infty$ the NTK is finite and the gradient descent dynamic is non-divergent, while ``standard'' parametrization requires learning rates of layer $l$ parameters to be scaled as $\eta_l \sim \frac{1}{N_{l-1}}$ for the gradient descent dynamic to be non-divergent. For this reason the parametrization~\eqref{eq:fcc_layer} is often called ``NTK'' parametrization, and it is chosen for the current work.

\paragraph{Network evolution in the infinite-width limit.} Now we further consider infinite width limit $N_l\rightarrow\infty$, where the network is characterized by the following key properties (see Section~\ref{sec:output_dyn}):
\begin{itemize}
    \item[(A1)] The network output $f(\mathbf{x})$ is a draw from a Gaussian Process (GP) at initialization.
    \item[(A2)] The NTK $\Theta(\mathbf{x}, \mathbf{x}')$ converges to a non-random deterministic value at initialization.
    \item[(A3)] The NTK $\Theta(\mathbf{x}, \mathbf{x}')$ stays constant during training.
\end{itemize}
The properties (A2, A3) actually allow to forget about model parameters $\mathbf{W}$ in~\eqref{eq:NTK_model_output_evolution} since the NTK has some deterministic predefined values, and derivatives of the loss depend only on network outputs. This means that~\eqref{eq:NTK_model_output_evolution} becomes a closed equation on model outputs $f(\mathbf{x})$ and can be integrated to obtain the model outputs $f_t(\mathbf{x})$ at time $t$ given their initial values $f_0(\mathbf{x})$. In the case of regression problem with MSE loss $L(f,y)=\frac{1}{2}(f-y)^2$ the output dynamics can be integrated analytically
\begin{equation}
    \delta \mathbf{f}_t = \delta\mathbf{f}_0 e^{-\mathbf{\Theta} t}.
\end{equation}
Here the $\delta \mathbf{f}_t$ is a size $B$ vector of network approximation errors $\delta f_{a,t}\equiv f_t(\mathbf{x}_a)-y_a$ on the train set $\mathcal{D}$ and $\mathbf{\Theta}$ is a $B\times B$ NTK matrix $\Theta_{ab}\equiv \Theta(\mathbf{x}_a, \mathbf{x}_b)$.

\paragraph{Classification with cross-entropy loss.} Now let us consider in more detail the case of classification problem with $C$ classes and cross entropy loss
\begin{equation}\label{eq:cross_entropy}
    L(\mathbf{f}, c) = -\log p_c(\mathbf{f}), \quad p_i(\mathbf{f})\equiv \frac{\exp(f_i)}{\sum_{j=1}^C\exp(f_j)}, 
\end{equation}
where $c$ is the target class and $\mathbf{f}$ is the vector of model outputs. We will rely on this discussion when describing experiments with NTK at initialisation and during training in Section~\ref{sec:NTK_init_and_dyn}. In the described setting the network output will have $C$ components and corresponding NTK will have additional matrix structure in the class space
\begin{equation}
\Theta_{ii'}(\mathbf{x},\mathbf{x}') \equiv \frac{\partial f_i(\mathbf{x}, \mathbf{W})}{\partial \mathbf{W}} \frac{\partial f_{i'}(\mathbf{x}', \mathbf{W})}{\partial \mathbf{W}}.    
\end{equation}
Let us now derive the expression for the time derivative of train set loss $\mathcal{L}$
\begin{equation}
\begin{split}
    \frac{d \mathcal{L}}{dt} &= -\frac{1}{B} \sum_a \frac{\partial L(\mathbf{f}(\mathbf{x}_a), c_a)}{\partial \mathbf{f}(\mathbf{x}_a)} \frac{d\mathbf{f}(\mathbf{x}_a)}{dt} \\
    &= - \frac{1}{B^2}\sum_{a,b}\sum_{i,j} \frac{\partial L(\mathbf{f}(\mathbf{x}_a), c_a)}{\partial f_i(\mathbf{x}_a)} \Theta_{ij}(\mathbf{x}_a, \mathbf{x}_b) \frac{\partial L(\mathbf{f}(\mathbf{x}_b), c_b)}{\partial f_j(\mathbf{x}_b)} = -\left(\frac{1}{B}\nabla_\mathbf{f}\mathcal{L}\right)^T  \mathbf{\Theta} \left(\frac{1}{B}\nabla_\mathbf{f}\mathcal{L}\right).
\end{split}
\end{equation}
Here in the last equality we denoted by $\nabla_\mathbf{f}\mathcal{L}$ the $BC$-sized vector of with components $\frac{\partial L(\mathbf{f}(\mathbf{x}_a), c_a)}{\partial f_i(\mathbf{x}_a)}$ and by $\mathbf{\Theta}$ the respective $BC\times BC$ NTK matrix in order to emphasize inner product structure of right-hand side. For the cross entropy loss $\nabla_\mathbf{f}\mathcal{L}$ has components
\begin{equation}\label{eq:CS_loss_derivative_components}
    \frac{\partial L(\mathbf{f}(\mathbf{x}), c)}{\partial f_i(\mathbf{x})} = p_i(\mathbf{f}(\mathbf{x})) - \delta_{ic}.
\end{equation}
For the sufficiently trained network with $p_{c}(\mathbf{f}(\mathbf{x})) \approx 1$ the components of $\nabla_\mathbf{f}\mathcal{L}$ with $i=c_a$ will dominate over components with $i\ne c_a$. This can be understood from~\eqref{eq:CS_loss_derivative_components} by noting that for wrong classes $i\ne c$ the respective probability has the order $p_i \sim \frac{1-p_c}{C-1}$, which is significantly smaller than $1-p_c$ if the number of classes $C$ is big. Thus the NTK components $\Theta_{c_ac_b}(\mathbf{x}_a,\mathbf{x}_b)$ become more important than the other components in the late stages of training.

\section{NTK for Embedded Ensembles}\label{sec:EEs_NTK}
\subsection{Setup}\label{sec:setup}
In this section we focus on fully connected network architecture given by
\begin{equation}
  \label{eq:EE_layer}
    \begin{cases}
      z^{l}_{\alpha j} = \frac{1}{\sqrt{N_{l-1}}} \sum\limits_{i = 1}^{N_{l - 1}} x_{\alpha i}^{l-1} w^{l}_{ij},
      \\
      x^{l}_{\alpha j} = u^{l}_{\alpha j} ~ \phi\bigl(v^{l}_{\alpha j} z^l_{\alpha j}\bigr).
    \end{cases}
\end{equation}
Note that this architecture is identical to the BatchEnsemble architecture given in the main text, except for absence of bias terms. We have decided to exclude the bias terms in order to keep the formulas compact; all the results below can be easily generalized to include bias terms. Although this architecture includes all pre- and post-activation modulations in all layers, we would like to consider cases where modulations of certain types are absent at some layers. To this end we only fix initialisation parameters of common weights $w^l_{ij} \sim \mathcal{N}(0,1)$, and allow all the other parameters $v^l_{\alpha j}, u^l_{\alpha j}$ to have an arbitrary distributions, identical within layer and across models, and having finite first two moments denoted (with a slight abuse of notation) by
\begin{align}
    &\mathbb{E}[v^l_{\alpha j}]=V^l_1, \quad \mathbb{E}[(v^l_{\alpha j})^2]=V^l_2, \\ 
    &\mathbb{E}[u^l_{\alpha j}]=U^l_1, \quad \mathbb{E}[(u^l_{\alpha j})^2]=U^l_2.
\end{align}
In this way we are able to remove any modulation from the architecture by simply considering it a deterministic random variable with the moments  $V^l_1=V^l_2=1$ (implying that this random variable equals 1 a.s.). The MC Dropout ensembles can also be covered by~\eqref{eq:EE_layer} by allowing the modulations to be  nontrainable. We denote whether the corresponding modulation is trainable by the binary function $T(u)$ which equals $1$ when the modulation $u$ is trainable and $0$ otherwise.  

\subsection{Output dynamics}
Here we follow the steps outlined in Section~\ref{sec:NTK_background}. As described in Section~\ref{sec:train_emb_ens}, the EE parameters are updated according to
\begin{align}
    &\frac{d}{dt}\mathbf{w} = - \frac{\gamma(M)}{M} \sum\limits_{\beta=1}^M \frac{\partial \mathcal{L}_\beta(\mathbf{w}, \mathbf{u}_\beta)}{\partial \mathbf{w}}, \\
    &\frac{d}{dt}\mathbf{u}_\alpha=- \frac{\partial \mathcal{L}_\alpha(\mathbf{w}, \mathbf{u}_\alpha)}{\partial \mathbf{u}_\alpha}.
\end{align}
Here we for simplicity ignored different learning rates for common and individual parameters. The induced dynamic for EE model outputs $f_\alpha(\mathbf{x})\equiv f(\mathbf{w},\mathbf{u}_\alpha,\mathbf{x})$ is
\begin{equation}
    \begin{split}
        \frac{d}{dt} f(\mathbf{w},\mathbf{u}_\alpha,\mathbf{x}) =& -\frac{\gamma(M)}{M}\sum\limits_{\beta=1}^M\frac{\partial f(\mathbf{w},\mathbf{u}_\alpha,\mathbf{x})}{\partial \mathbf{w}} \frac{\partial \mathcal{L}_\beta(\mathbf{w},\mathbf{u}_\beta)}{\partial \mathbf{w}} - \frac{\partial f(\mathbf{w},\mathbf{u}_\alpha,\mathbf{x})}{\partial \mathbf{u}_\alpha} \frac{\partial \mathcal{L}_\alpha(\mathbf{w},\mathbf{u}_\alpha)}{\partial \mathbf{u}_\alpha}\\
        =&-\frac{\gamma(M)}{M B}\sum\limits_{\beta=1}^M\sum\limits_{b=1}^B\frac{\partial f(\mathbf{w},\mathbf{u}_\alpha,\mathbf{x})}{\partial \mathbf{w}}\frac{\partial f(\mathbf{w},\mathbf{u}_\beta,\mathbf{x}_b)}{\partial \mathbf{w}} \frac{\partial L(f_\beta(\mathbf{x}_b), y_b)}{\partial f_\beta(\mathbf{x}_b)} \\
        &-\frac{1}{B}\sum\limits_{b=1}^B\frac{\partial f(\mathbf{w},\mathbf{u}_\alpha,\mathbf{x})}{\partial \mathbf{u}_\alpha}\frac{\partial f(\mathbf{w},\mathbf{u}_\alpha,\mathbf{x}_b)}{\partial \mathbf{u}_\alpha} \frac{\partial L(f_\alpha(\mathbf{x}_b), y_b)}{\partial f_\alpha(\mathbf{x}_b)}\\
        &=-\frac{1}{B}\sum\limits_{b=1}^B \Theta_{\alpha\beta}(\mathbf{x},\mathbf{x}_b)\frac{\partial L(f_\beta(\mathbf{x}_b),y_b)}{\partial f_\beta(\mathbf{x}_b)}.
    \end{split}
\end{equation}
Thus we again managed to collect inner products of gradients of EE models outputs in a single quantity, which we naturally call \textit{Neural Tangent Kernel} of the respective Embedded Ensemble. Its full definition is given as in equation~\eqref{eq:EE_NTK0}:
\begin{equation}\label{eq:EE_NTK}
\begin{split}
    &\Theta_{\alpha\beta}(\mathbf{x},\mathbf{x}') = \frac{\gamma(M)}{M}\Theta^{com}_{\alpha\beta}(\mathbf{x},\mathbf{x}') + \delta_{\alpha\beta}\Theta_\alpha^{\text{ind}}(\mathbf{x},\mathbf{x}'),\\
    &\Theta^{\text{com}}_{\alpha\beta}(\mathbf{x},\mathbf{x}') = \frac{\partial f(\mathbf{w}, \mathbf{u}_\alpha,\mathbf{x})}{\partial \mathbf{w}} \frac{\partial f(\mathbf{w}, \mathbf{u}_\beta,\mathbf{x}')}{\partial \mathbf{w}},\\
    &\Theta^{\text{ind}}_\alpha(\mathbf{x},\mathbf{\mathbf{x}}') = \frac{\partial f(\mathbf{w}, \mathbf{u}_\alpha,\mathbf{x})}{\partial \widetilde{\mathbf{u}}_\alpha} \frac{\partial f(\mathbf{w}, \mathbf{u}_\alpha,\mathbf{x}')}{\partial \widetilde{\mathbf{u}}_\alpha}.
\end{split}
\end{equation}

\subsection{The layer-wise hierarchy of covariances and NTK's}\label{sec:SM_detailed_results}

The main  theoretical results of the paper are stated as Theorems~\ref{theorem_1} and~\ref{theorem_2}. The key technical ingredients of these two theorems are recursive descriptions of, respectively, the layer output covariances and NTK's. In this section we state them in the form of respective Propositions~\ref{prop:1} and~\ref{prop:2}.

We first describe the setting and notation required for the propositions. Following original works~\cite{lee2018deep,jacot2018neural} on infinite width networks, we take widths of hidden layers $N_l$ to infinity sequentially, starting from the first layer. Then we consider the outputs of intermediate layer $z^l_{\alpha j}(\mathbf{x})$ at the moment when $N_{l'}=\infty, l'<l$ but $N_l$ is finite, and the respective NTK
\begin{equation}
    \Theta^l_{\alpha\beta,jj'}(\mathbf{x},\mathbf{x}')=\frac{\partial z^l_{\alpha j}(\mathbf{x})}{\partial \widetilde{\mathbf{W}}^{\leq l}} \frac{\partial z^l_{\beta j'}(\mathbf{x}')}{\partial \widetilde{\mathbf{W}}^{\leq l}}.
\end{equation}
Here $\mathbf{W}^{\leq l} = \{\mathbf{w}^{l'}\}_{l'=1}^l \cup \{\mathbf{u}^{l'}\}_{l'=1}^{l-1}$ is the set of all network parameters used to calculate $\mathbf{z}^l(\mathbf{x})$, and the tilde $\;\widetilde{\;}\;$ over the set of parameters denotes that only the trainable parameters are taken from that set. Also note that if we take the last layer $l=L+1$ we will simply get full ensemble NTK: $\Theta^{L+1}_{\alpha\beta,jj'}(\mathbf{x},\mathbf{x}')\equiv\Theta_{\alpha\beta,jj'}(\mathbf{x},\mathbf{x}')$.

Following~\cite{lee2019wide} we also introduce shorthand notations for averages of activation function $\phi(z)$ and its derivative $\dot{\phi}(z)$:
\begin{align}
\label{eq:Phi_def}
    &\Phi(\mathbf{\Sigma}, v_1, v_2) \equiv \langle\phi(v_1 z_1) \phi(v_2 z_2)\rangle, \\
    &\dot{\Phi} (\mathbf{\Sigma}, v_1, v_2) \equiv \langle \dot{\phi}(v_1 z_1) \dot{\phi}(v_2 z_2)\rangle, \\
    &\Phi_z(\mathbf{\Sigma}, v_1, v_2) \equiv \langle z_1 \phi(v_1 z_1) z_2 \phi(v_2 z_2)\rangle.
\end{align}
Here the average is taken w.r.t. random variables $v_1,v_2$ (they may coincide!) and independent of them 2-dimensional Gaussian variable $(z_1,z_2) \sim \mathcal{N}(0,\mathbf{\Sigma})$. The main results of this section are stated in the following two propositions. 

\begin{prop}\label{prop:1}
Consider a hidden layer $l+1\geq 2$ at initialisation. In the sequential limit $N_{l'}\rightarrow \infty, l'\leq l$, the outputs $z^{l+1}_{\alpha j}(\mathbf{x})$ of this layer  converge towards a Gaussian Process with zero mean and the covariance $\langle z^{l+1}_{\alpha j}(\mathbf{x}) z^{l+1}_{\beta j'}(\mathbf{x}')\rangle = \delta_{jj'}\Sigma^{l+1}_{\alpha\beta}(\mathbf{x},\mathbf{x}')$, which can be expressed in terms of covariance of the previous layer outputs $\Sigma^{l}_{\alpha\beta}(\mathbf{x},\mathbf{x}')$ as

\textit{1.} for same models $\alpha=\beta$:
\begin{equation}\label{eq:model_diag_covariance_propagation}
    \Sigma^{l+1}_{\alpha\alpha}(\mathbf{x},\mathbf{x}') = U^{l}_2 \Phi(\mathbf{\Sigma}^{l}_{\alpha\alpha}(\mathbf{x},\mathbf{x}'), v^{l}_\alpha,v^{l}_\alpha);
\end{equation}
\textit{2. for different models $\alpha\ne\beta$:}
\begin{equation}\label{eq:model_offdiag_covariance_propagation}
    \Sigma^{l+1}_{\alpha\beta}(\mathbf{x},\mathbf{x}') = (U^{l}_1)^2\Phi(\mathbf{\Sigma}^{l}_{\alpha\beta}(\mathbf{x},\mathbf{x}'), v^{l}_\alpha,v^{l}_\beta).
\end{equation}
It the first hidden layer the covariance is given by $\langle z^1_{\alpha j}(\mathbf{x}) z^1_{\beta j'}(\mathbf{x}')\rangle = \delta_{jj'}\Sigma^1_{\alpha\beta}(\mathbf{x},\mathbf{x}')$ with $\Sigma^1_{\alpha\beta}(\mathbf{x},\mathbf{x}')=\frac{\mathbf{x}^T\mathbf{x}'}{N_0}$.
\end{prop}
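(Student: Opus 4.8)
The plan is to argue by induction on the layer index $l$, using the sequential wide-limit structure: at the moment we send $N_l\to\infty$, all previous widths $N_{l'}$, $l'<l$, are already infinite, so by the inductive hypothesis the pre-activations $z^l_{\alpha j}(\mathbf x)$ of the previous layer are a zero-mean GP with the stated covariance, and distinct neurons $j$ are independent (the $\delta_{jj'}$ structure). Conditionally on the realized modulations $\{v^l_{\alpha j},u^l_{\alpha j}\}$, the post-activations $x^l_{\alpha j}(\mathbf x)=u^l_{\alpha j}\,\phi(v^l_{\alpha j}z^l_{\alpha j}(\mathbf x))$ are, for different $j$, i.i.d.\ across $j$ (since both the Gaussian field $z^l_{\cdot j}$ and the modulation pair $(u^l_{\cdot j},v^l_{\cdot j})$ are i.i.d.\ in $j$). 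Therefore $z^{l+1}_{\alpha j}(\mathbf x)=N_l^{-1/2}\sum_{i=1}^{N_l}x^l_{\alpha i}(\mathbf x)\,w^{l+1}_{ij}$ is, conditionally on the $x^l$'s, a sum of $N_l$ independent zero-mean terms (because $w^{l+1}_{ij}\sim\mathcal N(0,1)$ is independent of everything with mean zero), so a (conditional) CLT / Lindeberg argument — exactly as in~\cite{lee2018deep,jacot2018neural} — gives asymptotic Gaussianity as $N_l\to\infty$, with diagonal-in-$j$ covariance equal to the almost-sure limit of $N_l^{-1}\sum_i x^l_{\alpha i}(\mathbf x)x^l_{\beta i}(\mathbf x')$. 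The bias term and the joint Gaussianity across finitely many inputs $\mathbf x,\mathbf x'$ and models $\alpha,\beta$ follow the same template (multivariate CLT for the finite vector of relevant pre-activations).

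The content of the two formulas then comes from computing that limit by the law of large numbers:
\begin{equation*}
\frac{1}{N_l}\sum_{i=1}^{N_l} u^l_{\alpha i}\phi(v^l_{\alpha i}z^l_{\alpha i}(\mathbf x))\,u^l_{\beta i}\phi(v^l_{\beta i}z^l_{\beta i}(\mathbf x'))\;\longrightarrow\;\mathbb{E}\bigl[u^l_{\alpha}u^l_{\beta}\,\phi(v^l_\alpha z_1)\phi(v^l_\beta z_2)\bigr],
\end{equation*}
where the expectation is over the modulations and an independent bivariate Gaussian $(z_1,z_2)\sim\mathcal N(0,\mathbf\Sigma^l_{\alpha\beta})$ — here I use that, per the inductive hypothesis, the vector $(z^l_{\alpha i}(\mathbf x),z^l_{\beta i}(\mathbf x'))$ is Gaussian with covariance matrix $\mathbf\Sigma^l_{\alpha\beta}(\mathbf x,\mathbf x')$ and is, for fixed $i$, independent of the modulations $(u^l_{\alpha i},v^l_{\alpha i},u^l_{\beta i},v^l_{\beta i})$. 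Now split on $\alpha=\beta$ versus $\alpha\ne\beta$. If $\alpha=\beta$ then $u^l_\alpha u^l_\beta=(u^l_{\alpha})^2$ has mean $U^l_2$, and it multiplies $\phi(v^l_\alpha z_1)\phi(v^l_\alpha z_2)$ with the \emph{same} pre-activation Gaussian pair and the \emph{same} modulation $v^l_\alpha$, giving $U^l_2\,\Phi(\mathbf\Sigma^l_{\alpha\alpha},v^l_\alpha,v^l_\alpha)$, which is~\eqref{eq:model_diag_covariance_propagation} (the factorization $\mathbb{E}[u^2]\,\mathbb{E}[\phi\phi]$ is valid because $u$ is independent of $(z_1,z_2,v)$). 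If $\alpha\ne\beta$ then $u^l_\alpha$ and $u^l_\beta$ are independent of each other and of the $\phi$-factors, so the expectation factorizes as $\mathbb{E}[u^l_\alpha]\mathbb{E}[u^l_\beta]\,\mathbb{E}[\phi(v^l_\alpha z_1)\phi(v^l_\beta z_2)]=(U^l_1)^2\,\Phi(\mathbf\Sigma^l_{\alpha\beta},v^l_\alpha,v^l_\beta)$, which is~\eqref{eq:model_offdiag_covariance_propagation}; note $v^l_\alpha$ and $v^l_\beta$ are distinct independent draws inside $\Phi$, consistent with the definition~\eqref{eq:Phi_def} allowing the two modulation arguments to differ. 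The base case $l=0$ is immediate: $z^1_{\alpha j}(\mathbf x)=N_0^{-1/2}\sum_i x_i w^1_{ij}$ is exactly Gaussian (linear in the Gaussian weights), with covariance $\delta_{jj'}\mathbf x^T\mathbf x'/N_0$ and no dependence on $\alpha$ since the first layer has no modulations acting on the input.

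The main obstacle I anticipate is making the conditional-CLT step fully rigorous in the \emph{sequential} limit, i.e.\ propagating the Gaussian-field description of $z^l$ through a nonlinearity $\phi$ and then through one more linear layer while controlling the error terms — one needs the previous layer's convergence to hold jointly over the finite set of inputs and model indices, uniform integrability / moment bounds to upgrade convergence in distribution of the empirical second moments to the claimed deterministic limits, and the fact that the limiting $x^l$-field, though non-Gaussian, has i.i.d.\ coordinates in $j$ with finite variance (which needs $\phi$ to have at most the polynomial/Lipschitz growth assumed, and the modulations to have finite second moments — both granted in the setup). This is the standard technical heart of NTK-type results; I would cite~\cite{lee2018deep,jacot2018neural,lee2019wide} for the single-network version and emphasize that the only genuinely new bookkeeping is carrying the model indices $\alpha,\beta$ and the modulations through, which is exactly what produces the $U^l_2$ versus $(U^l_1)^2$ dichotomy above.
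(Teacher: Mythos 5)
Your proposal is correct and follows essentially the same route as the paper's proof: induction over layers with the base case $z^1_{\alpha j}$ exactly Gaussian, a CLT for the sum over the width index as $N_l\to\infty$ using the inductive i.i.d.-in-$j$ structure of the post-activations, and a law-of-large-numbers computation of the limiting covariance that factorizes into $\langle u^l_\alpha u^l_\beta\rangle\,\Phi(\mathbf\Sigma^l_{\alpha\beta},v^l_\alpha,v^l_\beta)$, yielding $U^l_2$ for $\alpha=\beta$ and $(U^l_1)^2$ for $\alpha\ne\beta$. The only differences are presentational (you condition on the modulations and flag the uniform-integrability technicalities explicitly, which the paper leaves implicit), so no substantive gap.
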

Here the pair $(v^{l}_\alpha,v^{l}_\beta)$ has the same distribution as the initialization distribution of $(v^{l}_{\alpha j},v^{l}_{\beta j})$ for any $j$. $\mathbf{\Sigma}^{l}_{\alpha\beta}(\mathbf{x},\mathbf{x}')$ is a $2\times2$ covariance matrix given by 
\begin{equation}\label{eq:2by2_covariance}
    \mathbf{\Sigma}^l_{\alpha\beta}(\mathbf{x},\mathbf{x}') = \begin{pmatrix}
    \Sigma^l_{\alpha\alpha}(\mathbf{x},\mathbf{x}) & \Sigma^l_{\alpha\beta}(\mathbf{x},\mathbf{x}') \\
    \Sigma^l_{\beta\alpha}(\mathbf{x}',\mathbf{x}) & \Sigma^l_{\beta\beta}(\mathbf{x}',\mathbf{x}')
    \end{pmatrix}.
\end{equation}

\begin{prop}\label{prop:2}
Consider a hidden layer $l+1\geq 2$ at initialisation. In the sequential limit $N_{l'}\rightarrow \infty, l'\leq l$, the NTK of layer $l+1$  converges towards a deterministic value $\Theta^{l+1}_{\alpha\beta,jj'}(\mathbf{x},\mathbf{x}') = \delta_{jj'}\Theta^{l+1}_{\alpha\beta}(\mathbf{x},\mathbf{x}')$ and can be expressed in terms of NTK of the previous layer  $\Theta^{l}_{\alpha\beta}(\mathbf{x},\mathbf{x}')$ and the covariance $\Sigma^{l}_{\alpha\beta}(\mathbf{x},\mathbf{x}')$ as

\textit{1.} for same models $\alpha=\beta$:
\begin{equation}\label{eq:model_diag_NTK_propagation}
\begin{split}
    \Theta^{l+1}_{\alpha\alpha}(\mathbf{x},\mathbf{x}') =& U^l_2 V^l_2\Theta^{l}_{\alpha\alpha}(\mathbf{x},\mathbf{x}') \dot{\Phi}(\mathbf{\Sigma}^{l}_{\alpha\alpha}(\mathbf{x},\mathbf{x}'), v^{l}_\alpha,v^{l}_\alpha)+ U^l_2\Phi(\mathbf{\Sigma}^{l}_{\alpha\alpha}(\mathbf{x},\mathbf{x}'), v^{l}_\alpha,v^{l}_\alpha); \\
    &+T(u^l)\Phi(\mathbf{\Sigma}^{l}_{\alpha\alpha}(\mathbf{x},\mathbf{x}'), v^{l}_\alpha,v^{l}_\alpha) + T(v^l) U^l_2 \Phi_z(\mathbf{\Sigma}^{l}_{\alpha\alpha}(\mathbf{x},\mathbf{x}'), v^{l}_\alpha,v^{l}_\alpha).
\end{split}
\end{equation}
\textit{2. for different models $\alpha\ne\beta$:}
\begin{equation}\label{eq:model_offdiag_NTK_propagation}
    \Theta^{l+1}_{\alpha\beta}(\mathbf{x},\mathbf{x}') = (U^l_1)^2 (V^l_1)^2\Theta^{l}_{\alpha\beta}(\mathbf{x},\mathbf{x}') \dot{\Phi}(\mathbf{\Sigma}^{l}_{\alpha\beta}(\mathbf{x},\mathbf{x}'), v^{l}_\alpha,v^{l}_\beta)+ (U^l_1)^2\Phi(\mathbf{\Sigma}^{l}_{\alpha\beta}(\mathbf{x},\mathbf{x}'), v^{l}_\alpha,v^{l}_\beta)
\end{equation}
In the first hidden layer the NTK is given by $\Theta^1_{\alpha\beta,jj'}(\mathbf{x},\mathbf{x}') = \delta_{jj'}\Theta^1_{\alpha\beta}(\mathbf{x},\mathbf{x}')$ with $\Theta^1_{\alpha\beta}(\mathbf{x},\mathbf{x}')=\Sigma^1_{\alpha\beta}(\mathbf{x},\mathbf{x}')$.
\end{prop}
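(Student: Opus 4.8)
The plan is to prove Proposition~\ref{prop:2} by induction on the layer index $l$, using Proposition~\ref{prop:1} as a black box for the forward-pass covariances. The base case $l=1$ is direct: the only parameters entering $z^1_{\alpha j}(\mathbf{x})=\tfrac{1}{\sqrt{N_0}}\sum_i x_i w^1_{ij}$ are the common weights $w^1$, so $\Theta^1_{\alpha\beta,jj'}(\mathbf{x},\mathbf{x}')=\sum_{i,k}\tfrac{\partial z^1_{\alpha j}(\mathbf{x})}{\partial w^1_{ik}}\tfrac{\partial z^1_{\beta j'}(\mathbf{x}')}{\partial w^1_{ik}}=\delta_{jj'}\tfrac{\mathbf{x}^{T}\mathbf{x}'}{N_0}=\delta_{jj'}\Sigma^1_{\alpha\beta}(\mathbf{x},\mathbf{x}')$. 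For the step $l\to l+1$, the parameters entering $z^{l+1}$ but not $z^{l}$ are the common weights $w^{l+1}$ together with the layer-$l$ modulations $u^l,v^l$ (trainable only if $T(u^l)=1$, resp. $T(v^l)=1$). I would therefore split $\Theta^{l+1}_{\alpha\beta,jj'}$ into a \emph{backward} part (the sum over parameters already present in $\widetilde{\mathbf{W}}^{\le l}$) and three \emph{local} parts coming from $w^{l+1}$, $u^l$ and $v^l$.

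For the backward part, the chain rule gives $\tfrac{\partial z^{l+1}_{\alpha j}(\mathbf{x})}{\partial p}=\tfrac1{\sqrt{N_l}}\sum_i w^{l+1}_{ij}\,u^l_{\alpha i}v^l_{\alpha i}\,\dot\phi\!\bigl(v^l_{\alpha i}z^l_{\alpha i}(\mathbf{x})\bigr)\tfrac{\partial z^l_{\alpha i}(\mathbf{x})}{\partial p}$ for $p\in\widetilde{\mathbf{W}}^{\le l}$, so this part equals a double sum over neurons $i,i'$ of layer $l$ of $\tfrac1{N_l}w^{l+1}_{ij}w^{l+1}_{i'j'}$ times the two Jacobian prefactors times $\Theta^l_{\alpha\beta,ii'}(\mathbf{x},\mathbf{x}')$. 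Sending $N_l\to\infty$: the freshly sampled weights $w^{l+1}$ are independent of everything computed at layers $\le l$, so $w^{l+1}_{ij}w^{l+1}_{i'j'}$ concentrates on $\delta_{ii'}\delta_{jj'}$; the inductive hypothesis $\Theta^l_{\alpha\beta,ii'}\to\delta_{ii'}\Theta^l_{\alpha\beta}$ removes the remaining $i\ne i'$ terms; and a law of large numbers over the (asymptotically i.i.d.) index $i$ turns $\tfrac1{N_l}\sum_i$ into an expectation. By Proposition~\ref{prop:1} the pair $\bigl(z^l_{\alpha i}(\mathbf{x}),z^l_{\beta i}(\mathbf{x}')\bigr)$ is, in the limit, centered Gaussian with covariance $\mathbf{\Sigma}^l_{\alpha\beta}(\mathbf{x},\mathbf{x}')$ and independent of the modulations; since $u^l,v^l$ are i.i.d.\ within the layer and independent across models, factoring the modulation averages produces $U^l_2V^l_2$ when $\alpha=\beta$ but only $(U^l_1)^2(V^l_1)^2$ when $\alpha\ne\beta$ — this is the crucial point where cross-model independence enters — multiplied by the Gaussian average $\dot\Phi(\mathbf{\Sigma}^l_{\alpha\beta},v^l_\alpha,v^l_\beta)$ and by $\Theta^l_{\alpha\beta}(\mathbf{x},\mathbf{x}')$. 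This yields the first summand of~\eqref{eq:model_diag_NTK_propagation}, resp.\ of~\eqref{eq:model_offdiag_NTK_propagation}.

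The local part from $w^{l+1}$ is $\sum_{i,k}\tfrac{\partial z^{l+1}_{\alpha j}(\mathbf{x})}{\partial w^{l+1}_{ik}}\tfrac{\partial z^{l+1}_{\beta j'}(\mathbf{x}')}{\partial w^{l+1}_{ik}}=\delta_{jj'}\tfrac1{N_l}\sum_i x^l_{\alpha i}(\mathbf{x})x^l_{\beta i}(\mathbf{x}')$, which by the same law of large numbers tends to $\delta_{jj'}\langle x^l_\alpha(\mathbf{x})x^l_\beta(\mathbf{x}')\rangle=\delta_{jj'}\Sigma^{l+1}_{\alpha\beta}(\mathbf{x},\mathbf{x}')$, i.e.\ the second summand of~\eqref{eq:model_diag_NTK_propagation}/\eqref{eq:model_offdiag_NTK_propagation} once Proposition~\ref{prop:1} is invoked. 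The local parts from $u^l$ and $v^l$ each carry a factor $\delta_{\alpha\beta}$ (since $\partial x^l_{\alpha i}/\partial u^l_{\beta i}$ and $\partial x^l_{\alpha i}/\partial v^l_{\beta i}$ vanish unless $\alpha=\beta$), so they only contribute on the model diagonal; differentiating $u^l\phi(v^l z^l)$ in $u^l$ gives $\phi(v^l z^l)$, hence the term $T(u^l)\Phi(\mathbf{\Sigma}^l_{\alpha\alpha},v^l_\alpha,v^l_\alpha)$, and differentiating in $v^l$ gives $z^l\dot\phi(v^l z^l)$, hence $T(v^l)U^l_2\Phi_z(\mathbf{\Sigma}^l_{\alpha\alpha},v^l_\alpha,v^l_\alpha)$, after taking $N_l\to\infty$. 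Summing the backward and local parts reproduces~\eqref{eq:model_diag_NTK_propagation} for $\alpha=\beta$ and~\eqref{eq:model_offdiag_NTK_propagation} (no $T$-terms) for $\alpha\ne\beta$; a modulation that is absent is handled by treating it as the a.s.\ constant $1$ ($U^l_1=U^l_2=1$ or $V^l_1=V^l_2=1$, $T=0$).

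The main obstacle is rigor in the sequential infinite-width limit: justifying the concentration of the neuron sums, the vanishing of the $i\ne i'$ contributions, the persistence through the limit of the Kronecker structure ($\delta_{jj'}$ and $\delta_{ii'}$) asserted in the inductive hypothesis, and the interchange of the limit with expectations and parameter derivatives. These are precisely the technical points established in the standard NTK literature (Jacot et al.; Lee et al.; Arora et al.), and can be adapted with only cosmetic changes — or re-derived via a Tensor-Programs computation, paralleling the proof of NTK constancy that this paper bases on~\cite{Dyer2020AsymptoticsOW}. The one genuinely new ingredient is the separate treatment of the modulation averages, but that is elementary: because distinct models' modulations are independent, they contribute the second moment $U^l_2$ ($V^l_2$) on the model diagonal and only the squared first moment $(U^l_1)^2$ ($(V^l_1)^2$) off it, which is exactly the mechanism behind the model-diagonality of the NTK in Theorem~\ref{theorem_2} when $U_1^L=0$.
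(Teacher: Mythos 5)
Your proposal is correct and follows essentially the same route as the paper's proof: induction on the layer, the same four-way decomposition of $\Theta^{l+1}_{\alpha\beta,jj'}$ into a backward term over $\widetilde{\mathbf{W}}^{\le l}$ plus local terms from $w^{l+1}$, $u^l$ and $v^l$, with the law of large numbers and the factorization of modulation averages into $U^l_2V^l_2$ on the model diagonal versus $(U^l_1)^2(V^l_1)^2$ off it. The only cosmetic difference is that you identify the $w^{l+1}$ term directly with $\Sigma^{l+1}_{\alpha\beta}$ via Proposition~\ref{prop:1}, whereas the paper writes it as $\langle u^l_\alpha u^l_\beta\rangle\,\Phi(\mathbf{\Sigma}^{l}_{\alpha\beta},v^l_\alpha,v^l_\beta)$ — these are the same quantity.
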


Let us discuss both propositions in the case of different models $\alpha \ne \beta$. Observe that according to~\eqref{eq:model_offdiag_covariance_propagation} and~\eqref{eq:model_offdiag_NTK_propagation} the model off-diagonal elements $\alpha\ne\beta$ of NTK and covariance are zeroed if the mean of post-activation modulations is zero: $U^l_1=0$.
Moreover, it is sufficient to have only centered post activation modulations $u^L_{\alpha j}$ in the final hidden layer in order to have model diagonal NTK $\Theta^{L+1}_{\alpha\beta}$ and covariance $\Sigma^{L+1}_{\alpha\beta}$ of the ensemble. However, for finite networks we expect that having only last layer modulations $u^L_\alpha$ may be not sufficient for model independence and having modulation $u^l_\alpha$ and $v_\alpha^l$ at intermediate layers  may be beneficial for model independence.

\subsection{Proofs of propositions~\ref{prop:1} and~\ref{prop:2}}\label{sec:proofs}
Both Proposition~\ref{prop:1} and Proposition~\ref{prop:2} are proved by induction with the base of induction corresponding to the first layer $z^1_{\alpha j}$.
\begin{proof}[Proof of Proposition~\ref{prop:1}]
For $l=1$ we note that $z^1_{\alpha j}(\mathbf{x})$ is a linear combination of independent Gaussian random variables $w^1_{ij}$, therefore the result is a Gaussian random variables with the co-variance
\begin{equation}
    \langle z^1_{\alpha j}(\mathbf{x}) z^1_{\beta j'}(\mathbf{x}')\rangle 
    = \frac{1}{N_0}\Big\langle \sum\limits_{i} w^1_{ij} x_i \sum\limits_{i'} w^1_{i'j'} x'_{i'} \Big\rangle = \frac{1}{N_0}\sum\limits_{i, i'} \langle w^1_{ij} w^1_{i'j'}\rangle x_i x'_{i'} = \delta_{jj'}\frac{\mathbf{x}^T\mathbf{x}'}{N_0}.
\end{equation}
In the induction step we assume that $z^l_{\alpha j}(\mathbf{x})$ is a GP with zero mean and covariance $\delta_{jj'} \Sigma_{\alpha \beta}(\mathbf{x},\mathbf{x}')$. Since the network signals $z^l_{\alpha j}$ are Gaussian and co-variance is diagonal w.r.t neuron index $j$, the signals become independent for different $j$ and so is post-activations $x^l_{\alpha j}$. The sum in calculation~\eqref{eq:EE_layer} of $z^{l+1}_{\alpha j}(\mathbf{x})$ then consists of i.i.d random variables. According to CLT, in the infinite width limit $N_l \rightarrow \infty$ any finite collection of $z^{l+1}_{\alpha j}(\mathbf{x})$ (indexed by different $\alpha$,$j$ or $\mathbf{x}$) converge in law to multivariate Gaussian distribution with zero mean and covariance
\begin{equation}
\begin{split}
        \langle z^{l+1}_{\alpha j}(\mathbf{x}) z^{l+1}_{\beta j'}(\mathbf{x}')\rangle ={}& \frac{1}{N_l}\Big\langle \sum\limits_{i}w^{l+1}_{ij} u^l_{\alpha i} \phi(v^l_{\alpha i} z^l_{\alpha i}(\mathbf{x})) \sum\limits_{i'}w^{l+1}_{i'j'} u^l_{\beta i'} \phi(v^l_{\beta i'} z^l_{\beta i'}(\mathbf{x}')) \Big \rangle \\
        ={}& \frac{1}{N_l}\sum\limits_{i, i'} \langle w^{l+1}_{ij} w^{l+1}_{i'j'} \rangle \langle u^l_{\alpha i} u^l_{\beta i'}\rangle \langle \phi(v^l_{\alpha i} z^l_{\alpha i}(\mathbf{x})) \phi(v^l_{\beta i'} z^l_{\beta i'}(\mathbf{x}')) \rangle  \\
        ={}& \delta_{jj'}\langle u^l_{\alpha} u^l_{\beta}\rangle\Phi(\mathbf{\Sigma}^{l}_{\alpha\beta}(\mathbf{x},\mathbf{x}'), v^{l}_\alpha,v^{l}_\beta).
\end{split}
\end{equation}
To get expressions~\eqref{eq:model_diag_covariance_propagation} and~\eqref{eq:model_offdiag_covariance_propagation} we only need to note that $\langle u^l_{\alpha} u^l_{\beta}\rangle=U^l_2, \; \alpha=\beta$ and $\langle u^l_{\alpha} u^l_{\beta}\rangle=(U^l_1)^2, \; \alpha\ne\beta$ and the same for $v^l$.
\end{proof}

\begin{proof}[Proof of Proposition~\ref{prop:2}]
The NTK of the first layer has contribution only from common parameters $w^1_{ij}$
\begin{equation}
    \Theta^{1}_{\alpha\beta, jj'}(\mathbf{x},\mathbf{x}') = \frac{\partial z^1_{\alpha j}(\mathbf{x})}{\partial \mathbf{w}^1} \frac{\partial z^1_{\beta j'}(\mathbf{x}')}{\partial \mathbf{w}^1} = \delta_{jj'} \frac{\mathbf{x}^T \mathbf{x}'}{N_0}.
\end{equation}
The calculation above is the base of induction. Now we proceed with the induction step 
\begin{equation}
\label{eq:NTK_decomp}
\begin{split}
    \Theta^{l+1}_{\alpha\beta, jj'}(\mathbf{x},\mathbf{x}') =& \frac{\partial z^{l+1}_{\alpha j}(\mathbf{x})}{\partial \widetilde{\mathbf{W}}^{\leq l}} \frac{\partial z^{l+1}_{\beta j'}(\mathbf{x}')}{\partial \widetilde{\mathbf{W}}^{\leq l}} + \frac{\partial z^{l+1}_{\alpha j}(\mathbf{x})}{\partial \mathbf{w}^{l+1}} \frac{\partial z^{l+1}_{\beta j'}(\mathbf{x}')}{\partial \mathbf{w}^{l+1}} \\
    &+T(v^l)\frac{\partial z^{l+1}_{\alpha j}(\mathbf{x})}{\partial \mathbf{v}^{l}} \frac{\partial z^{l+1}_{\beta j'}(\mathbf{x}')}{\partial \mathbf{v}^{l}} + T(u^l)\frac{\partial z^{l+1}_{\alpha j}(\mathbf{x})}{\partial \mathbf{u}^{l}} \frac{\partial z^{l+1}_{\beta j'}(\mathbf{x}')}{\partial \mathbf{u}^{l}}.
\end{split}
\end{equation}
Now we examine each term of this decomposition separately. The first term describes propagation of NTK from previous layer to the current one and can be calculated as
\begin{equation}
\label{eq:1st_term}
    \begin{split}
        \frac{\partial z^{l+1}_{\alpha j}(\mathbf{x})}{\partial \widetilde{\mathbf{W}}^{\leq l}} \frac{\partial z^{l+1}_{\beta j'}(\mathbf{x}')}{\partial \widetilde{\mathbf{W}}^{\leq l}} &\stackrel{(1)}={} \sum\limits_{ii'}\frac{\partial z^{l+1}_{\alpha j}(\mathbf{x})}{\partial z^{l}_{\alpha i}(\mathbf{x})} \frac{\partial z^{l+1}_{\beta j'}(\mathbf{x}')}{\partial z^{l}_{\beta i'}(\mathbf{x}')} \frac{\partial z^{l}_{\alpha i}(\mathbf{x})}{\partial \widetilde{\mathbf{W}}^{\leq l}} \frac{\partial z^{l}_{\beta i'}(\mathbf{x}')}{\partial \widetilde{\mathbf{W}}^{\leq l}} \\
        &\stackrel{(2)}={} \frac{1}{N_l}\sum\limits_{ii'}\Theta^{l}_{\alpha\beta, ii'}(\mathbf{x},\mathbf{x}') w^{l+1}_{ij} u^l_{\alpha i} v^l_{\alpha i} \dot{\phi}(z^l_{\alpha i}(\mathbf{x})v^l_{\alpha i}) w^{l+1}_{i'j'} u^l_{\beta i'} v^l_{\beta i'} \dot{\phi}(z^l_{\beta i'}(\mathbf{x'})v^l_{\beta i'}) \\
        &\stackrel{(3)}={} \Theta^{l}_{\alpha\beta, ii}(\mathbf{x},\mathbf{x}') \langle w^{l+1}_{ij} w^{l+1}_{ij'}\rangle \langle u^{l}_{\alpha i} u^{l}_{\beta i}\rangle \langle v^{l}_{\alpha i} v^{l}_{\beta i}\rangle \langle \dot{\phi}(z^l_{\alpha i}(\mathbf{x'})v^l_{\alpha i})\dot{\phi}(z^l_{\beta i}(\mathbf{x'})v^l_{\beta i})\rangle \\
        &\stackrel{(4)}={} \Theta^{l}_{\alpha\beta}(\mathbf{x},\mathbf{x}') \delta_{jj'} \langle u^{l}_{\alpha} u^{l}_{\beta}\rangle \langle v^{l}_{\alpha} v^{l}_{\beta}\rangle \dot{\Phi}(\mathbf{\Sigma}^{l}_{\alpha\beta}(\mathbf{x},\mathbf{x}'), v^{l}_\alpha,v^{l}_\beta).
    \end{split}
\end{equation}
In $(1)$ and $(2)$ we used expression~\eqref{eq:EE_layer} of current post-activations $z^{l+1}$ in terms of previous layer post-activations $z^l$, and then applied chain rule for differentiation. In $(3)$ we used result of Proposition~\ref{prop:1} that $z^l_i$ are i.i.d random variables, which allows to apply the Law of Large Numbers in the limit $N_l \rightarrow \infty$. Finally in $(4)$ we substitute averages and simplified notations.

Now we calculate the next three terms in~\eqref{eq:NTK_decomp}, representing the contribution of current layer parameters to NTK.
The second term
\begin{equation}
\label{eq:2st_term}
    \begin{split}
        \frac{\partial z^{l+1}_{\alpha j}(\mathbf{x})}{\partial \mathbf{w}^{l+1}} \frac{\partial z^{l+1}_{\beta j'}(\mathbf{x}')}{\partial \mathbf{w}^{l+1}} &= \delta_{jj'} \frac{1}{N_l}\sum\limits_i u^l_{\alpha i} \phi(z^l_{\alpha i}(\mathbf{x})v^l_{\alpha i}) u^{l}_{\beta i}\phi(z^l_{\beta i}(\mathbf{x}')v^l_{\beta i}) \\
        &\stackrel{N_l\rightarrow \infty}={} \delta_{jj'}  \langle u^l_{\alpha i} u^l_{\beta i} \rangle \langle \phi(z^l_{\alpha i}(\mathbf{x})v^l_{\alpha i}) \phi(z^l_{\beta i}(\mathbf{x}')v^l_{\beta i})\rangle \\
        &= \delta_{jj'} \langle u^l_\alpha u^l_\beta\rangle \Phi(\mathbf{\Sigma}^{l}_{\alpha\beta}(\mathbf{x},\mathbf{x}'), v^{l}_\alpha,v^{l}_\beta).
    \end{split}
\end{equation}
The third term
\begin{equation}
\label{eq:3st_term}
    \begin{split}
        \frac{\partial z^{l+1}_{\alpha j}(\mathbf{x})}{\partial \mathbf{v}^{l}} \frac{\partial z^{l+1}_{\beta j'}(\mathbf{x}')}{\partial \mathbf{v}^{l}} &= \delta_{\alpha \beta} \frac{1}{N_l}\sum\limits_i w^{l+1}_{ij} u^l_{\alpha i} z^l_{\alpha i}(\mathbf{x}) \dot{\phi}(z^l_{\alpha i}(\mathbf{x})v^l_{\alpha i}) w^{l+1}_{ij'} u^l_{\beta i} z^l_{\beta i}(\mathbf{x}') \dot{\phi}(z^l_{\beta i}(\mathbf{x}')v^l_{\beta i}) \\
        &\stackrel{N_l\rightarrow \infty}={} \delta_{\alpha \beta} \langle w^{l+1}_{ij} w^{l+1}_{ij'}\rangle \langle u^l_{\alpha i} u^l_{\beta i} \rangle \langle z^l_{\alpha i}(\mathbf{x}) \dot{\phi}(z^l_{\alpha i}(\mathbf{x})v^l_{\alpha i})z^l_{\beta i}(\mathbf{x}') \dot{\phi}(z^l_{\beta i}(\mathbf{x}')v^l_{\beta i})\rangle \\
        &= \delta_{\alpha\beta} \delta_{jj'} u^l_2 \Phi_z(\mathbf{\Sigma}^{l}_{\alpha\beta}(\mathbf{x},\mathbf{x}'), v^{l}_\alpha,v^{l}_\beta).
    \end{split}
\end{equation}
The fourth term
\begin{equation}
\label{4st_term}
    \begin{split}
        \frac{\partial z^{l+1}_{\alpha j}(\mathbf{x})}{\partial \mathbf{u}^{l}} \frac{\partial z^{l+1}_{\beta j'}(\mathbf{x}')}{\partial \mathbf{u}^{l}} &= \delta_{\alpha \beta} \frac{1}{N_l}\sum\limits_i w^{l+1}_{ij} \phi(z^l_{\alpha i}(\mathbf{x})v^l_{\alpha i}) w^{l+1}_{ij'}\phi(z^l_{\beta i}(\mathbf{x}')v^l_{\beta i}) \\
        &\stackrel{N_l\rightarrow \infty}={} \delta_{\alpha \beta} \langle w^{l+1}_{ij} w^{l+1}_{ij'}\rangle \langle \phi(z^l_{\alpha i}(\mathbf{x})v^l_{\alpha i}) \phi(z^l_{\beta i}(\mathbf{x}')v^l_{\beta i})\rangle \\
        &= \delta_{\alpha\beta} \delta_{jj'} \Phi(\mathbf{\Sigma}^{l}_{\alpha\beta}(\mathbf{x},\mathbf{x}'), v^{l}_\alpha,v^{l}_\beta).
    \end{split}
\end{equation}
For each of last three terms the steps are analogous to steps (2-4) in the calculation of the first term~\eqref{eq:1st_term}. Finally, to get the expressions~\eqref{eq:model_diag_NTK_propagation} and~\eqref{eq:model_offdiag_NTK_propagation} of Proposition~\ref{prop:2} we again note that $\langle u^l_{\alpha} u^l_{\beta}\rangle=U^l_2, \; \alpha=\beta$ and $\langle u^l_{\alpha} u^l_{\beta}\rangle=(U^l_1)^2, \; \alpha\ne\beta$ and the same for $v^l$.
\end{proof}

\subsection{Proofs of Theorems~\ref{theorem_1} and~\ref{theorem_2}}\label{sec:thm_proofs}

In the proofs of Theorems~\ref{theorem_1} and~\ref{theorem_2} we rely on, respectively, Propositions~\ref{prop:1} and~\ref{prop:2}.

\begin{proof}[Proof of Theorem~\ref{theorem_1}]\hfill

1. \emph{(Gaussianity)} Convergence to a zero mean Gaussian process is already proved in Proposition~\ref{prop:1}. 

2. \emph{(Independence)}
To show that covariance of different $\alpha\ne\beta$ model outputs $\mathbb{E}[f_\alpha(\mathbf{x})f_\beta(\mathbf{x}')]$ vanish when $U_1^L=0$ we recall that $\mathbb{E}[f_\alpha(\mathbf{x})f_\beta(\mathbf{x}')] = \Sigma^{L+1}_{\alpha\beta}(\mathbf{x},\mathbf{x}')$, which can be recursively calculated from~\eqref{eq:model_offdiag_covariance_propagation} starting with $\Sigma^{1}_{\alpha\beta}(\mathbf{x},\mathbf{x}')$. From~\eqref{eq:model_offdiag_covariance_propagation} we see that regardless of $\Sigma^{L}_{\alpha\beta}(\mathbf{x},\mathbf{x}')$, the final layer covariance $\Sigma^{L+1}_{\alpha\beta}(\mathbf{x},\mathbf{x}')$ will be zero due to its proportionality to $(U_1^L)^2$. 

3. \emph{(Breakdown of independence)} 
Let $U_1^L\ne 0$ and suppose that for different ensemble models $\alpha\ne\beta$ and at some non-zero inputs $\mathbf{x},\mathbf{x'}\ne 0$ the output covariance $\Sigma^{L+1}_{\alpha\beta}(\mathbf{x},\mathbf{x}')=0$. We need to show that this is only possible if $L=1$ and $\mathbf x,\mathbf x'$ are linearly dependent. 

First note that since, by~\eqref{eq:model_offdiag_covariance_propagation}, $\Sigma^{L+1}_{\alpha\beta}(\mathbf{x},\mathbf{x}')=(U^{L}_1)^2\Phi(\mathbf{\Sigma}^{L}_{\alpha\beta}(\mathbf{x},\mathbf{x}'), v^{L}_\alpha,v^{L}_\beta)$, the factor $\Phi(\mathbf{\Sigma}^{L}_{\alpha\beta}(\mathbf{x},\mathbf{x}'), v^{L}_\alpha,v^{L}_\beta)=\mathbb E(\phi(v^{L}_\alpha z_1) \phi(v^{L}_\beta z_2))$ has to vanish. This is clearly impossible if $\phi(z)>0$ for all $z\in\mathbb{R}$. Since by assumption $\phi$ is a non-decreasing nonnegative function, we are left to consider the case $\{z| \; \phi(z)=0\}=(-\infty,a] \ne \varnothing$.

Express  $\Phi(\mathbf{\Sigma}^{L}_{\alpha\beta}(\mathbf{x},\mathbf{x}'), v^{L}_\alpha,v^{L}_\beta)$ in terms of the expectation of $\phi(v^{L}_\alpha z_1) \phi(v^{L}_\beta z_2)$ conditioned on fixed $v^{L}_\alpha,v^{L}_\beta$: 
\begin{equation}\label{eq:aslab}
    \Phi(\mathbf{\Sigma}^{L}_{\alpha\beta}(\mathbf{x},\mathbf{x}'), v^{L}_\alpha,v^{L}_\beta)=\mathbb E(\phi(v^{L}_\alpha z_1) \phi(v^{L}_\beta z_2)) = \mathbb E_{v^{L}_\alpha,v^{L}_\beta}[\mathbb E_{z_1,z_2}(\phi(v^{L}_\alpha z_1) \phi(v^{L}_\beta z_2)|v^{L}_\alpha,v^{L}_\beta)].
\end{equation}
For each $v^{L}_\alpha,v^{L}_\beta$,  the conditioned 2D random vector $\mathbf y_{v^{L}_\alpha, v^{L}_\beta}=(v^{L}_\alpha z_1, v^{L}_\beta z_2)|v^{L}_\alpha,v^{L}_\beta$ is normal, and the conditional expectation in~\eqref{eq:aslab} can be written as
\begin{equation}
    \mathbb E_{z_1,z_2}(\phi(v^{L}_\alpha z_1) \phi(v^{L}_\beta z_2)|v^{L}_\alpha,v^{L}_\beta) = \mathbb E (F(\mathbf y_{v^{L}_\alpha, v^{L}_\beta})),
\end{equation}
where $F(y_1,y_2)=\phi(y_1)\phi(y_2)$ is a nonnegative function which is strictly positive on the quadrant $R_a=(a,+\infty)\times (a,+\infty).$

Since $\Phi(\mathbf{\Sigma}^{L}_{\alpha\beta}(\mathbf{x},\mathbf{x}'), v^{L}_\alpha,v^{L}_\beta)=0$ and $F\ge 0$, we must have $\mathbb E (F(\mathbf y_{v^{L}_\alpha, v^{L}_\beta}))=0$ almost surely in $v^{L}_\alpha, v^{L}_\beta$. Moreover, since $F$ is strictly positive on the quadrant $R_a$, the distributions of $\mathbf y_{v^{L}_\alpha, v^{L}_\beta}$ must be degenerate and supported outside the quadrant $R_a$ almost surely in $v^{L}_\alpha, v^{L}_\beta$. Therefore, since $\mathbf y_{v^{L}_\alpha, v^{L}_\beta}$ is normal, for (a.s.) any $v^{L}_\alpha, v^{L}_\beta$: 
\begin{enumerate}
    \item either at least one component of $\mathbf y_{v^{L}_\alpha, v^{L}_\beta}$ is 0 a.s.,
    \item or the components of $\mathbf y_{v^{L}_\alpha, v^{L}_\beta}$ are linearly dependent and negatively correlated.
\end{enumerate}   
The variables $v^{L}_\alpha, v^{L}_\beta$ have the same distribution and, by assumption, do not vanish identically. Then, in both cases $\alpha=\beta$ (when they coincide) and $\alpha\ne \beta$, the event $Q=\{v^{L}_\alpha v^{L}_\beta>0\}$ has a positive probability.  We will show now that conditioned on this event, only case 2 above is possible, and only if $L=1$ and $\mathbf x,\mathbf x'$ are linearly dependent.

\emph{Case 1.} To exclude case 1, take, for example, the first component $y^{(1)}_{v^{L}_\alpha, v^{L}_\beta}$ of $\mathbf y_{v^{L}_\alpha, v^{L}_\beta}$ and observe that, by~\eqref{eq:2by2_covariance}, 
\begin{equation}\label{eq:ey1}
    \mathbb E\big[(y^{(1)}_{v^{L}_\alpha, v^{L}_\beta})^2\big] =
    \mathbb E\big[(v^{L}_\alpha z_1)^2|v^{L}_\alpha\big] =(v^{L}_\alpha)^2\big(\mathbf\Sigma^L_{\alpha\beta}(\mathbf x,\mathbf x')\big)_{11}=
    (v^{L}_\alpha)^2\Sigma^L_{\alpha\alpha}(\mathbf x,\mathbf x).
\end{equation}
If $L>1$, then, by Proposition~\ref{prop:1},
\begin{equation}
    \Sigma^L_{\alpha\alpha}(\mathbf x,\mathbf x) = U_2^{L-1}\Phi(\mathbf\Sigma_{\mathbf x,\mathbf x}^{L-1},v^{L-1}_\alpha,v^{L-1}_\alpha) = U_2^{L-1}\mathbb E_{z\sim\mathcal{N}(0,\Sigma^{L-1}_{\alpha\alpha}(\mathbf{x},\mathbf{x})), v_\alpha^{L-1}}\big[\phi(v^{L-1}_\alpha z)^2\big]>0
\end{equation}
since $U_2^{L-1}>0$ and $v_\alpha^{L-1}$ is not 0 a.s. But then, since $v_\alpha^{L}$ is not 0 a.s. as well, $\mathbb E\big[(y^{(1)}_{v^{L}_\alpha, v^{L}_\beta})^2\big]>0$, i.e. $y^{(1)}_{v^{L}_\alpha, v^{L}_\beta}$ does not vanish.

On the other hand, if $L=1$, then, by Proposition~\ref{prop:1}, $\Sigma^1_{\alpha\alpha}(\mathbf{x},\mathbf{x})=\frac{\|\mathbf{x}\|^2}{N_0}>0$ for nonzero $\mathbf x$, again meaning that $y^{(1)}_{v^{L}_\alpha, v^{L}_\beta}$ does not vanish.

\emph{Case 2.} The negative correlation of the components of $\mathbf y_{v^{L}_\alpha, v^{L}_\beta}$ means that 
\begin{equation}\label{eq:ey2}
    0>\mathbb E\big[y^{(1)}_{v^{L}_\alpha, v^{L}_\beta}y^{(2)}_{v^{L}_\alpha, v^{L}_\beta}\big] =
    \mathbb E\big[v^{L}_\alpha z_1v^{L}_\beta z_2|v^{L}_\alpha,v^{L}_\beta\big] =v^{L}_\alpha v^L_\beta\big(\mathbf\Sigma^L_{\alpha\beta}(\mathbf x,\mathbf x')\big)_{12}=
    v^{L}_\alpha v^{L}_\beta\Sigma^L_{\alpha\beta}(\mathbf x,\mathbf x').
\end{equation}
If $L>1$, then, by Proposition~\ref{prop:1},
\begin{equation}\label{eq:alabm}
    \Sigma^L_{\alpha\beta}(\mathbf x,\mathbf x') = U_2^{L-1}\Phi(\mathbf\Sigma_{\mathbf x,\mathbf x'}^{L-1},v^{L-1}_\alpha,v^{L-1}_\beta) = U_2^{L-1}\mathbb E_{z\sim\mathcal{N}(0,\Sigma^{L-1}_{\alpha\beta}(\mathbf{x},\mathbf{x}')), v_\alpha^{L-1}, v_\beta^{L-1}}\big[\phi(v^{L-1}_\alpha z_1)\phi(v^{L-1}_\beta z_2)\big]\ge 0
\end{equation}
since $\phi\ge 0$. Since we are considering the event $Q=\{v^{L}_\alpha v^{L}_\beta>0\},$ inequality~\eqref{eq:alabm} contradicts inequality~\eqref{eq:ey2}.

On the other hand, if $L=1$, then, again by Proposition~\ref{prop:1},
$\Sigma^1_{\alpha\beta}(\mathbf{x},\mathbf{x}')=\frac{\mathbf{x}^T\mathbf{x}'}{N_0}$. This expression can indeed be negative for certain combinations of $\mathbf x,\mathbf x'$.  Recall that we also require the components of $\mathbf y_{v^{L}_\alpha, v^{L}_\beta}$ to be linearly dependent. The full covariance matrix of $\mathbf y_{v^{L}_\alpha, v^{L}_\beta}$ is  
\begin{align}
    \Sigma_{\mathbf y_{v^{L}_\alpha, v^{L}_\beta}} ={}& \operatorname{diag}(v_\alpha^1,v_\beta^1)
    \begin{pmatrix}
    \Sigma_{\alpha\alpha}^1(\mathbf x,\mathbf x) & \Sigma_{\alpha\beta}^1(\mathbf x,\mathbf x')\\
    \Sigma_{\beta\alpha}^1(\mathbf x',\mathbf x) & \Sigma_{\beta\beta}^1(\mathbf x',\mathbf x')
    \end{pmatrix}
    \operatorname{diag}(v_\alpha^1,v_\beta^1)\\
    ={}&\frac{1}{N_0}\operatorname{diag}(v_\alpha^1,v_\beta^1)
    \begin{pmatrix}
    \mathbf x^T\mathbf x & \mathbf x^T\mathbf x'\\
    (\mathbf x')^T\mathbf x & (\mathbf x')^T\mathbf x'
    \end{pmatrix}
    \operatorname{diag}(v_\alpha^1,v_\beta^1).
\end{align}
Under condition $Q$ and nonzero $\mathbf x,\mathbf x'$, this matrix has rank 1 if and only if $\mathbf x$ and $\mathbf x'$ are linearly dependent. This completes the proof.

\end{proof}

\begin{proof}[Proof of Theorem~\ref{theorem_2}]\hfill

1. \emph{(Determinacy)} Convergence of the NTK to a deterministic limit is already proved in Proposition~\ref{prop:2}. 

2. \emph{(Dynamic independence)} With $U_1^L=0$, the vanishing of off-diagonal $\alpha\ne\beta$ components of ensemble NTK $\Theta_{\alpha\beta}(\mathbf{x}, \mathbf{x}')\equiv \Theta^{L+1}_{\alpha\beta}(\mathbf{x}, \mathbf{x}')$ follows directly from~\eqref{eq:model_offdiag_NTK_propagation} at $l=L$.

3. \emph{(Breakdown of dynamic independence)} By iteratively applying~\eqref{eq:model_offdiag_NTK_propagation}, off-diagonal components of the ensemble NTK  are given by
\begin{equation}\label{eq:last_layer_NTK_decomp}
    \Theta_{\alpha\beta}(\mathbf{x}, \mathbf{x}') = \sum\limits_{l=1}^{L} \left(\prod\limits_{m=l}^{L} (U_1^m)^2 (V_1^m)^2 \dot{\Phi}(\mathbf{\Sigma}^{m}_{\alpha\beta}(\mathbf{x},\mathbf{x}'), v^{m}_\alpha,v^{m}_\beta)\right) \Sigma^l_{\alpha\beta}(\mathbf{x}, \mathbf{x}') + \Sigma^{L+1}_{\alpha\beta}(\mathbf{x}, \mathbf{x}').
\end{equation}
Note that for $\phi\in\mathcal{S}$ we automatically have $\Sigma^l_{\alpha\beta}(\mathbf{x}, \mathbf{x}') \geq 0$ for all $l>1$ and $\dot{\Phi}(\mathbf{\Sigma}^{l}_{\alpha\beta}(\mathbf{x},\mathbf{x}'), v^{l}_\alpha,v^{l}_\beta)\geq 0$ for $l\geq 1$. If we additionally assume $\mathbf{x}^T\mathbf{x}' \geq 0$ we get $\Sigma^1_{\alpha\beta}(\mathbf{x}, \mathbf{x}') \geq 0$ and thus all the terms of the sum in~\eqref{eq:last_layer_NTK_decomp} are non-negative. Then for $\Theta_{\alpha\beta}(\mathbf{x}, \mathbf{x}')>0$ it is sufficient to have $\Sigma^{L+1}_{\alpha\beta}(\mathbf{x}, \mathbf{x}')>0$, which we have already proved in Theorem~\ref{theorem_1} under a weaker condition on inputs: $\mathbf{x},\mathbf{x'}\ne0$ and $\mathbf{x}\ne -c\mathbf{x}', c>0$. 
\end{proof}

\subsection{Stationarity of NTK}\label{sec:ntk_stat_proof}
The full proof of the property (A3) for infinitely wide neural networks is more involved than the proofs of properties (A1, A2). At the same time, this property is qualitative in contrast to properties (A1, A2), which have the quantitative component -- recursive formulas for calculation of NTK and covariance. For these reasons in the present work we only present a non-rigorous argument showing that the extension of (A3) to Embedded Ensembles, namely the constancy of EE NTK $\Theta_{\alpha\beta}(\mathbf{x},\mathbf{x}')$ during GD training, also holds.

We base our argument on the approach of~\cite{Dyer2020AsymptoticsOW}, which we shortly describe here. For the details the reader should refer to original work~\cite{Dyer2020AsymptoticsOW}. The average of NTK at some time $t$ w.r.t. initialization is considered on the level of formal Taylor expansion at $t=0$
\begin{equation}\label{eq:NTK_Taylor_expansion}
    \langle \Theta(\mathbf{x},\mathbf{x}', t) \rangle =  \sum\limits_{n=0}^\infty \frac{t^n}{n!}\left\langle \left(\frac{d}{dt}\right)^n\Theta(\mathbf{x},\mathbf{x}', t)\Big|_{t=0}\right\rangle.
\end{equation}
Each term in this sum is given by a (collection of) certain \textit{correlation function} - the average at initialisation of products of network outputs and/or their derivatives. For example, in case of MSE loss the term $n=1$ in~\eqref{eq:NTK_Taylor_expansion} is given by
\begin{equation}
    \left\langle \frac{d}{dt}\Theta(\mathbf{x},\mathbf{x}', t)\Big|_{t=0}\right\rangle = \frac{1}{B}\sum\limits_{b=1}^B \left\langle\frac{\partial f(\mathbf{x})}{\partial \mathbf{W}}\frac{\partial^2 f(\mathbf{x}')}{\partial \mathbf{W}\partial \mathbf{W}'}\frac{\partial f(\mathbf{x}_b)}{\partial \mathbf{W}'}f(\mathbf{x}_b)\right\rangle + \mathbf{x}\leftrightarrow \mathbf{x}'.
\end{equation}
Then it is shown that this correlation function scales with the network width $N$ as $\langle \frac{d}{dt}\Theta(\mathbf{x},\mathbf{x}', t)\big|_{t=0}\rangle = O(\frac{1}{N})$, and also for any correlation function $\langle F(\overline{\mathbf{x}})\rangle = O(N^s)$ its time derivative at initialisation have the same scaling $\langle \frac{d}{dt}F(\overline{\mathbf{x}})\big|_{t=0}\rangle = O(N^s)$. Thus for NTK we have $\langle \Theta(\mathbf{x},\mathbf{x}', t) \rangle = \langle \Theta(\mathbf{x},\mathbf{x}', t=0) \rangle + O(\frac{1}{N})$.

This results are based on the conjecture which relates upper bound on width scaling of correlation function with the number of connected components in the cluster graph associated with the given correlation function. The intuition behind this conjecture comes from considering the case of trivial network activation function $\phi(z)=z$, which makes the network linear
\begin{equation}\label{eq:lin_network}
    f(\mathbf{x}) = N^{-\frac{L}{2}}\sum\limits_{i_0,i_1,\ldots,i_L}V_{i_L}W^L_{i_Li_{L-1}}\ldots W^2_{i_2i_1} U_{i_1i_0}x_{i_0}.
\end{equation}
In this section we show that if a certain correlation function composed of network outputs~\eqref{eq:lin_network} has width scaling $O(N^s)$, its Embedded Ensemble version with each $f(\mathbf{x})$ replaced with $f_\alpha(\mathbf{x})$ given by 
\begin{equation}\label{eq:EE_lin_network}
    f_\alpha(\mathbf{x}) = N^{-\frac{L}{2}}\sum\limits_{i_0,i_1,\ldots,i_L}V_{i_L}u^L_{\alpha i_l}W^L_{i_Li_{L-1}}\ldots u^2_{\alpha i_2}W^2_{i_2i_1} u^1_{\alpha i_1}U_{i_1i_0}x_{i_0},
\end{equation}
has the same width scaling $O(N^s)$ for any choice model indices $\alpha$. 

For the linear network functions~\eqref{eq:lin_network} all the correlation functions are some moments of multivariate Gaussian distribution of model parameters at initialization, which according to \textit{Isserlis theorem} reduce to the sum over all possible pairings of identical model parameters in the correlation function. These pairing produce delta functions $\delta_{i_l i_{l'}'}$ between some pairs of indices thus effectively reducing the number of sums in the corresponding contribution to correlation function. The number of sums over hidden layer indices $i_l$ left unconstrained together with normalization factors $N^{-\frac{L}{2}}$ define the width scaling of corresponding contribution to the correlation function. Finally, the contribution with the highest width scaling define the width scaling of the whole correlation function. 

Now consider correlation function composed of EE outputs~\eqref{eq:EE_lin_network} with arbitrary model indices $\alpha$. We can calculate corresponding average in two stages: first average over all parameters $W^l, U, V$, and then over all modulations $u^l_\alpha$. The first averaging will produce the same set of pairings between identical parameters $W^l_{ij}$ as for the correlation function with outputs~\eqref{eq:lin_network} with the same width scaling for each pairing. The average over modulations will result in the multiplicative factor which will depend on the chosen set of model indices $\alpha$ and on the chosen $W^l$ pairing, but will not depend on the network width $N$ and thus being $O(N^0)$ (however, this average may introduce additional delta functions which will lower the width scaling exponent $s$). Thus we can map contributions to single network correlation function to contributions to EE correlation function conserving the upper bound on the width scaling and thus the upper scaling of the whole correlation functions will also be conserved.     

Finally, the time derivative of Embedded Ensemble NTK $\Theta_{\alpha\beta}(\mathbf{x},\mathbf{x}')$ is also represented as combination of correlation functions considered above. For example, in case of MSE we have
\begin{equation}
    \left\langle \frac{d}{dt}\Theta_{\alpha\beta}(\mathbf{x},\mathbf{x}', t)\Big|_{t=0}\right\rangle = \frac{1}{B}\sum\limits_{b=1}^B\sum\limits_{\gamma=1}^M \left\langle\frac{\partial f_\alpha(\mathbf{x})}{\partial \mathbf{W}}\frac{\partial^2 f_\beta(\mathbf{x}')}{\partial \mathbf{W}\partial \mathbf{W}'}\frac{\partial f_\gamma(\mathbf{x}_b)}{\partial \mathbf{W}'}f_\gamma(\mathbf{x}_b)\right\rangle + (\alpha,\mathbf{x})\leftrightarrow (\beta,\mathbf{x}').
\end{equation}
Thus we have shown that NTK of Embedded Ensemble stays constant during training and its deviation from initial value is bounded as $O(N^{-1})$. However, note that this upper bound is not uniform w.r.t. time $t$ and large times may require large network widths.

\section{Experiment details}\label{sec:experiment_details}
  We performed experiments on CIFAR100 using a simple 4-hidden-layer convolutional architecture  similar to VGG16~\cite{simonyan2014very}, except for a fewer number of layers: the filters have receptive fields 3x3; the number of filters in the hidden layers is 64 -- 128 -- 256 -- 512.
  We initialized convolutional layers with Xavier normal  and initialized modulations with standard normal distribution. 
  Our architectures do not include batch normalization and bias term, both in fully-connected and in convolutional layers. We trained our models with cross--entropy loss and SGD with momentum equal to 0.9,  and regularized the weights with l2-norm. We trained all models with batch size 128 for 500 or 1000 epochs. We used a baseline learning rate equal to 0.01 for the first 500 epochs and then started to decrease it linearly to 1e-4 at the end of the training.  We used standard CIFAR100 train/test split, and extracted randomly 10k examples from train part for validation. 
  
  We used the standard parameterization~\cite{sohl2020infinite}. We tried standard and NTK-parametrization and found their accuracy to be similar, but convergence faster in the standard case. We remark that both parameterizations yield a kernel model in the infinite width limit. 
  
  In our experiments, we do not modulate the inputs of the first parameter layer and outputs of the last parameter layer (in contrast to the original BatchEnsemble architecture~\cite{wen2019batchensemble}). We have observed this to noticeably improve the overall accuracy of resulting models (compare \textit{blue} and \textit{yellow} curves on Figure~\ref{fig:ablation}) and remove unstable training behaviours. This can be explained as follows. The input layer has only 3 RGB channels. Due to randomness in the modulations, input modulations in some models can be much smaller than in others; accordingly, the gradients of some models can be dominated by gradients of other models, creating a significant imbalance in the ensemble. Such a disparity between different models does not occur if we modulate wide hidden layers, thanks to the averaging occurring by the law of large numbers. We indeed observe a much smaller spread of gradient magnitudes among the models if only hidden layers are modulated, see Figure~\ref{fig:Gradients_at_init} (left). We also exclude modulations in the output layer since each of the 100 output channels represents a particular target class and their randomization by modulations is pointless.   

\section{Additional experiments}\label{sec:add_exp}

\begin{figure*}[tb]
    \centering
    \setlength{\fboxsep}{0pt}
    \includegraphics[scale=0.42, trim=3mm 3mm 3mm 3mm, clip]{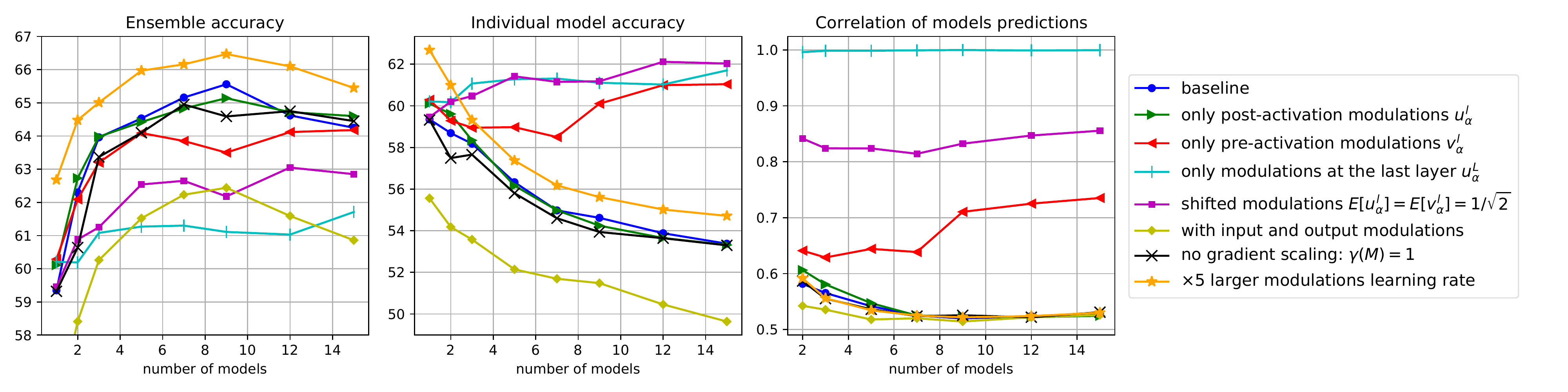}
    \caption{An ablation study for the baseline BatchEnsemble. The \textit{green} and \textit{red} curves use only a part of baseline (\textit{blue}) modulations. The \textit{cyan} curve uses only post-activation modulations at the last layer. The \textit{purple} curve uses shifted modulations with both variance and expectation equal to $1/\sqrt{2}$. The \textit{yellow} curve adds modulations to model inputs and outputs. The \textit{black} curve uses $1/M$ times smaller learning rate for common parameters, thus effectively using unscaled gradients. The \textit{orange} curve uses $\times5$ larger learning rate for all modulations.}
    \label{fig:ablation}
\end{figure*}

\begin{figure}[tb]
 \centering
 \includegraphics[scale=0.55, trim=3mm 4mm 3mm 3mm, clip]{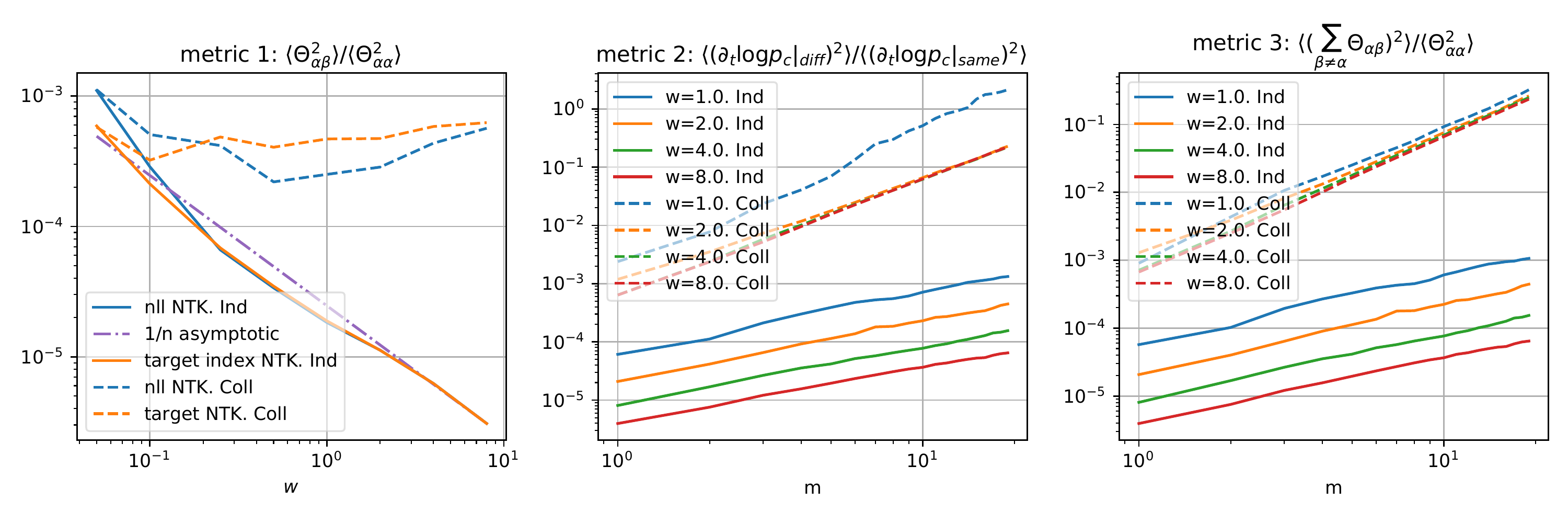}
  \caption{Each of the three plots gives the respective metric for intter-model interaction as described in the text. "Ind" denotes EE in the individual regime and "Coll" denotes EE in collective regime. In the left figure we can see that off-diagonal elements of NTK decay with width as $\sim w^{-1}$ in individual regime and stay constant in collective regime. In the central and right figures we can see that respective metrics scale with number of models $m$ and network width $w$ as $\sim\frac{m}{w}$ for individual regime and $\sim m^2$ for collective regime.}
  \label{fig:NTK_stat_at_init}
 \end{figure}
 
 \begin{figure}[tb]
 \centering
 \includegraphics[scale=0.65, trim=3mm 4mm 3mm 3mm, clip]{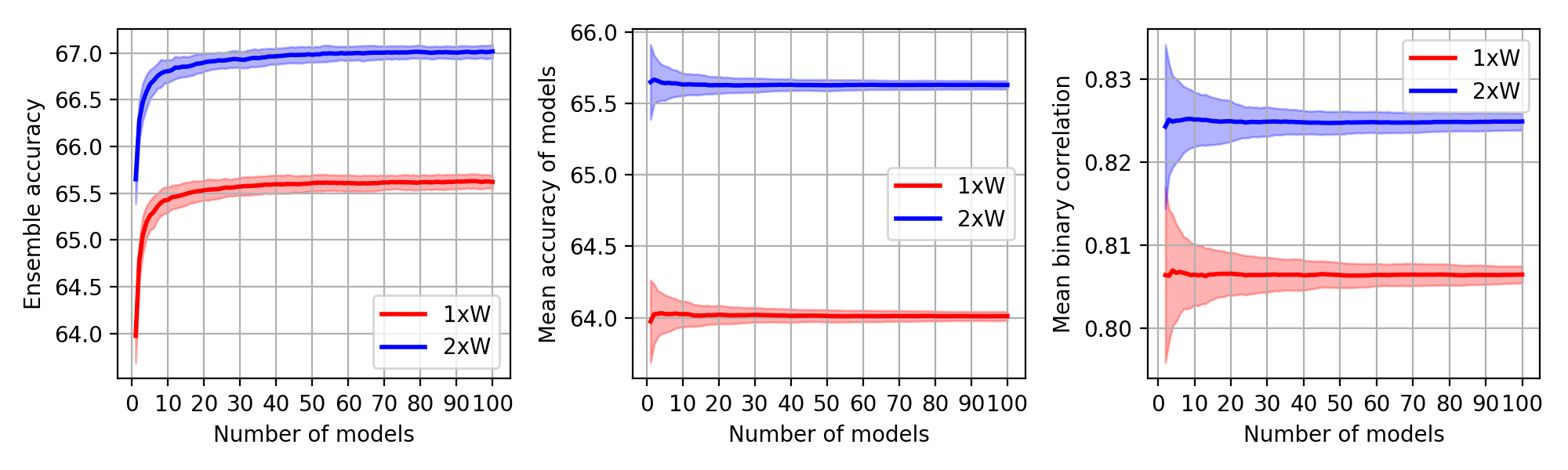}
  \caption{Shifted Gaussian MC Dropout with masks sampled independently for each forward pass during training. Number of models corresponds to the number of masks used during inference. The models are in collective regime, with no clear optimum for number of models, and no decrease in individual model accuracy with increase of ensemble size.}
  \label{fig:gaussian_mc_dropout_shifted}
 \end{figure}
 
  \begin{figure}[tb]
 \centering
 \includegraphics[scale=0.65, trim=3mm 4mm 3mm 3mm, clip]{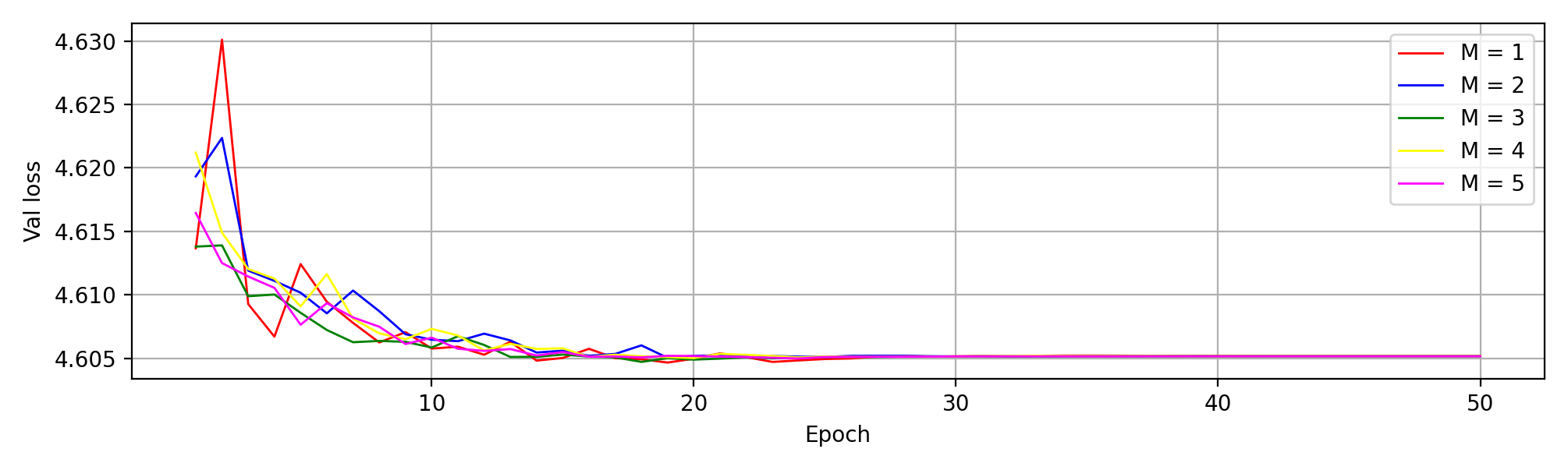}
\caption{Centered Gaussian MC Dropout with independent mask sampling during training. Due to the random mask being applied on each forward pass during training, and the fact that mask weights are centered around zero, there is no gradient direction beneficial to most batches, and model doesn't learn.}
  \label{fig:gaussian_mc_dropout_no_shift}
 \end{figure}
 
 \begin{figure}[tb]
 \centering
 \includegraphics[scale=0.4, trim=30mm 4mm 3mm 3mm, clip]{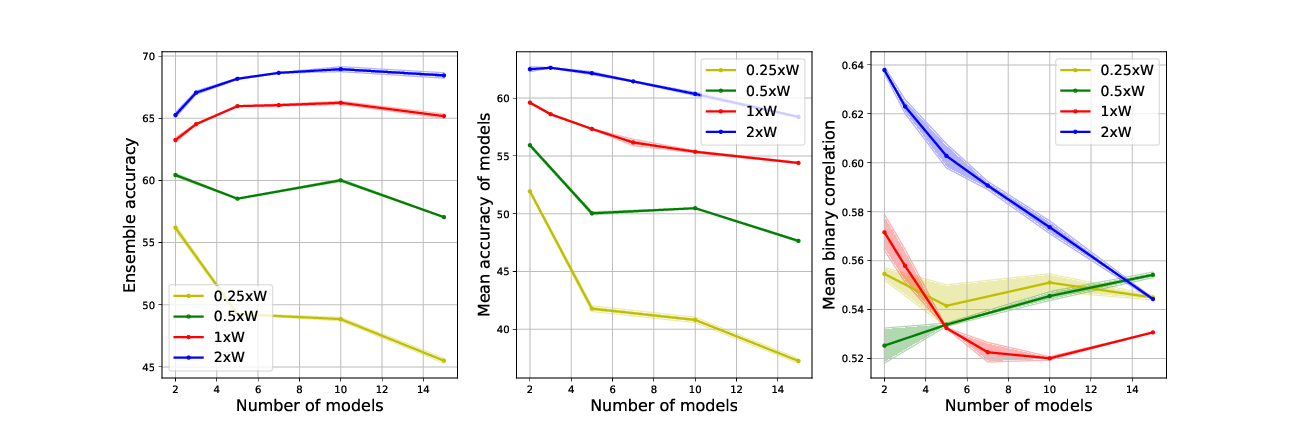}
\caption{Centered Gaussian MC Dropout. Number of models correspond to the fixed number of masks we use during training and inference. In accordance with the theory the models are in individual regime with low correlation between models and optimal finite ratio of number of models to the width.}
  \label{fig:MC-Dropout-zero-centred}
 \end{figure}
 
\subsection{Ablation study}
In Figure~\ref{fig:ablation} we study the effects of different changes in architecture or training procedure of our baseline BatchEnsemble. According to the theory, the presence of only pre-activation modulations $v^l_\alpha$ (\textit{red} curve) or initializing modulation with non-zero mean (\textit{purple} curve) lead to non-diagonal NTK and therefore more dependent models. We observe that this leads to larger correlations (on the right plot) and smaller ensemble accuracy (on the left plot). Again, according to the theory the presence of only post-activation modulations $u^l_\alpha$ (\textit{green} curve) is sufficient for dynamic independence of ensemble models. We see that in experiment the ensemble accuracy and model correlations are roughly the same as for the baseline. The \textit{cyan} curve has $\approx 100\%$ correlations and low ensemble accuracy. The behavior in the latter case is explained by the linear dependence of the output on the modulations in this case: the models in the ensemble are the same up to the last parameter layer, which makes individual modulations $u^L_\alpha$ train on the same convex loss surface and eventually converge to the same unique global minimum. This phenomenon does not allow to directly check predictions of Theorems~\ref{theorem_1} and~\ref{theorem_2} about special role in last layer modulations. However, this difficulty can be overcome by freezing last layer modulations which leads to LLD (last layer dropout) ensemble.  
The \textit{yellow} curve shows that the presence of input and output modulations leads to a severe drop in performance, as explained earlier.
The absence of common parameters gradient scaling (\textit{black} curve) just slightly decreases the ensemble accuracy. This seeming disagreement with the theory can be explained by the number of epochs (1000) being several times larger than needed to train the baseline ensemble, thus leaving enough time to train the models even with very low learning rate. The \textit{orange} curve is motivated by our observation that modulations gradients turned out to be significantly lower than common weights gradients, leading to very slow training of modulations. We can see that compensation of this effect with an increase of modulations learning rate leads to improvement of ensemble accuracy by $\sim 1\%$.    
 
\subsection{Model dependence at initialization}\label{sec:NTK_init_and_dyn}
In this section we provide experiments for various NTK-based metrics of inter-model interaction at initialization. We focus on BatchEnsemble with all modulations based on convolutional network with 4 hidden layers described in the main text and trained on CIFAR100 dataset. 

The full NTK for this problem, calculated on batch of size $B$ and for ensemble of size $M$ is a six dimensional tensor with dimensions $(C,C,M,M,B,B)$, which is quite large for CIFAR100 with $C=100$. For these reason we calculate not the full NTK but its two class reduced variants with shape $(M,M,B,B)$ inspired by discussion of NTK for cross entropy losses at the end of Section~\ref{sec:NTK_background}. We refer to this variants as "nll" NTK and "target" NTK which are defined as
\begin{align}
    &\Theta^{\text{nll}}_{\alpha\beta}(\mathbf{x}_a,\mathbf{x}_b) =  \sum\limits_{i,j=1}^C\frac{\partial L(\mathbf{f}_\alpha(\mathbf{x}_a), c_a)}{\partial f_{i\alpha}(\mathbf{x}_a)} \Theta_{ij,\alpha\beta}(\mathbf{x}_a, \mathbf{x}_b) \frac{\partial L(\mathbf{f}_\beta(\mathbf{x}_b), c_b)}{\partial f_{j\beta}(\mathbf{x}_b)}, \\
    &\Theta^{\text{target}}_{\alpha\beta}(\mathbf{x}_a,\mathbf{x}_b) = \Theta_{c_ac_b,\alpha\beta}(\mathbf{x}_a,\mathbf{x}_b).
\end{align}
Here $c_a,c_b$ denotes the class index of $\mathbf{x}_a,\mathbf{x}_b$, and $L(\mathbf{f},c)$ is the cross entropy loss~\eqref{eq:cross_entropy}. The first metric measuring cross model interaction is the average $\Theta_{\alpha\beta}^2(\mathbf{x}_a,\mathbf{x}_b)$ over all pairs of $(\alpha\ne\beta)$ and input points $\mathbf{x}_a,\mathbf{x}_b$ normalized by the similar average for coinciding models $\alpha=\beta$. The second metric is based on the time derivative of the model $\alpha$ loss at $\mathbf{x}_a$
\begin{equation}\label{eq:EE_nll_loss_derivative}
    \frac{d}{dt} L(\mathbf{f}_\alpha(\mathbf{x}_a),c_a) = - \frac{1}{B} \sum\limits_{\beta=1}^M\sum\limits_{b=1}^B \Theta^{\text{nll}}_{\alpha\beta}(\mathbf{x}_a,\mathbf{x}_b).
\end{equation}
We see that in the $\frac{d}{dt}L_\alpha$ there are contributions from same change of model $\alpha$ outputs based on its own loss $\mathcal{L}_\alpha$, and contributions from changes of model $\alpha$ outputs based on the losses of other models  $\mathcal{L}_\beta, \beta\ne\alpha$. Thus the second contribution represents the effect of model interactions on the training dynamic. Specifically, the metric describes the relative weight of the second contribution as
\begin{equation}\label{eq:NTK_metric_2}
    \left\langle(\partial_t \log p_{c}|_{diff})^2 / (\partial_t \log p_{c}|_{same})^2\right\rangle (m) \equiv \frac{\sum\limits_{\alpha, a} \left(\frac{1}{B}\sum\limits_{b=1}^B\sum\limits_{\beta\in\mathcal{B}_{m,\alpha}} \Theta^{\text{nll}}_{\alpha\beta}(\mathbf{x}_a,\mathbf{x}_b)\right)^2}{\sum\limits_{\alpha, a} \left(\frac{1}{B}\sum\limits_{b=1}^B\Theta^{\text{nll}}_{\alpha\alpha}(\mathbf{x}_a,\mathbf{x}_b)\right)^2}.
\end{equation}
Here $\mathcal{B}_{m\alpha}$ is any set of models $\beta\ne\alpha$ with size $m$. Finally, the third metric is obtained using the formula~\eqref{eq:NTK_metric_2} with $\Theta^{\text{target}}$ instead of $\Theta^{\text{nll}}$. Although it does not have such an explicit interpretation as the second metric, its calculated based only on NTK itself whereas the second metric also uses information about target class probabilities. All the three metrics at initialization for various network widths and for ensemble of size $M=15$ in individual regime (normally distributed modulations $u^l_\alpha$) and collective regime (all modulations distributed as $u^l_\alpha \sim \mathcal{N}(\frac{1}{\sqrt{2}}, \frac{1}{2})$) can be found in Figure~\ref{fig:NTK_stat_at_init}.

\subsection{MC Dropout}
To augment our results with BatchEnsembles, we also performed similar experiments with MC Dropout. More specifically, we  consider well-known Gaussian MC Dropout in which masks come from shifted normal distribution $\mathcal{N}(1, \sigma)$. In addition, we study \textit{centered} Gaussian MC Dropout variation in which masks come form centered Gaussian distribution $\mathcal{N}(0, \sigma)$. Centered Gaussian MC Dropout is close to BatchEnsemble with the only difference in non-learnable modulations. Additionally, we consider two approaches to training for these models. One, where we sample masks independently for each forward pass, as in classical Dropout, and another, where we create a set of masks beforehand, one for each model in the ensemble, and reuse them both at train and test time.

  In Figure~\ref{fig:gaussian_mc_dropout_shifted} we evaluate shifted Gaussian MC Dropout trained with infinite set of masks. For this experiment we set $\mathcal\sigma = 0.1$. One can see that in accordance with our theory it is in \textit{collective} regime, i.e. each model performs well but they are highly correlated. As expected, there is no optimal performance for some specific number of models. Instead, ensemble accuracy asymptotically approaches some limit, as diversity of available models is depleted with increase in their number. While increase in ensemble size causes almost monotonic increase in ensemble accuracy, high correlation of individual models limits this improvement, allowing for only modest gains in ensemble accuracy even with large number of models in ensemble. We can also observe, that individual model accuracy does not suffer from increase in ensemble size. 

  We also consider Centered Gaussian MC Dropout with independent mask sampling during training, see Figure~\ref{fig:gaussian_mc_dropout_no_shift}. As expected the model has hardly learned anything, since  applying  random  zero-centered  mask  at  each  forward  pass makes parameter gradients face random directions at each SGD step and thus there is no direction in the parameter space which will simultaneously reduce the loss of the majority of ensemble models.
 
  Figure~\ref{fig:MC-Dropout-zero-centred} depicts results for Centered Gaussian MC Dropout, but this time with fixed number of masks used both during training and test. In this configuration, zero-centered MC Dropout can learn quite well. It can be seen that it is in \textit{individual} regime, where models are not correlated, but there exists optimal number of models for each width to achieve optimal performance of the ensemble.  

\subsection{LLD ensemble}

Consider a special case of Embedded ensemble when dropout masks $t_{\alpha i}\sim\mathcal{N}(p, 1-p^2)$ are applied only at the last layer. Specifically, a network of (almost) any architecture can be written as
\begin{equation}\label{eq:ref_model_llm}
    f(\mathbf{x}) = \sum\limits_{i=1}^N \varphi_i(\mathbf{W}, \mathbf{x}) \mathbf{c_i}.
\end{equation}
where $\mathbf{c}_i$ are weights of the last linear layer and $\varphi_i(\mathbf{W}, \mathbf{x})$ are (non-linear) feature maps obtained by previous layers of the network with weights $\mathbf{W}$. Then, embedded ensemble with last layer modulations is constructed as 
\begin{equation}
    f_\alpha(\mathbf{x}) = \sum\limits_{i=1}^N \varphi_i(\mathbf{W}, \mathbf{x})u_{\alpha i} \mathbf{c_i}.
\end{equation}
At inference the prediction of such ensemble can be constructed at single forward pass small additional computational time  compared to reference model~\eqref{eq:ref_model_llm}. Specifically, at inference we apply a single averaged modulation $u_i$
\begin{equation}
    f_\text{ens}(\mathbf{x}) = \frac{1}{M}\sum\limits_{\alpha=1}^M f_\alpha(\mathbf{x}) = \sum\limits_{i=1}^N \varphi_i(\mathbf{W}, \mathbf{x})u_{i} \mathbf{c_i}, \quad u_i=\frac{1}{M}\sum\limits_{\alpha=1}^M u_{\alpha i}.
\end{equation}
During training  we again calculate all required gradients in a single backward calculation with a small additional computational time  compared to reference model~\eqref{eq:ref_model_llm}. Specifically, at train time we calculate the predictions of all ensemble models $f_\alpha(\mathbf{x})$ in a single forward pass, and then use it to calculate the train regime ensemble loss $L_{\text{ens, train}}$ to which backward pass will be applied.
\begin{equation}
    L_{\text{ens, train}} = \frac{\gamma(M)}{M} \sum\limits_{\alpha=1}^M \frac{1}{B}\sum\limits_{b=1}^B L(f_\alpha(\mathbf{x}_b), y_b).
\end{equation}
Computational graph of this backward pass is different from the computational graph of the reference model only at the last layer of the network and additional number of operations can be estimated as $O(MNC)$ for a single input $\mathbf{x}$.

We trained ensembles in this configuration with varied number of applied masks, and plot the results on the Figure~\ref{fig:last_layer_dropout}. We observe, that models in the ensemble operate in individual regime, Ensemble has a clear maximum performance at a certain ensemble size $M > 1$. We also observe, that increasing network's width moves the point of optimal performance further to the right. This is further confirmed by the fact that average correlation between individual models is also lower for the wider network. At the same time, relative gain of ensembling over using just one model is lower for the wider networks.

We also take a look at the dependence of LLD ensemble's behavior on the value of the $p$ parameter, which dictates variance and mean of the sampling distribution, from which masks are obtained. We can see, that increasing $p$ leads to flattening of all curves, which is perfectly explainable by thinking about the limit of the distribution as $p$ becomes closer to 1, eventually producing just a vector of ones. This makes all models exactly the same, and ensembling no longer produces any benefits. At the same time, increasing number of masks does not lead to detrimental effect of individual model accuracy, since no matter how many modulations we apply on the last layer, they are all the same and do not depend on the increasing number of masks in any way. We also see that mean model correlation increases with the $p$, which is also consistent with the asymptotic behavior of the models as $p$ approaches 1.

\begin{figure}[tb]
\centering
\includegraphics[scale=0.5]{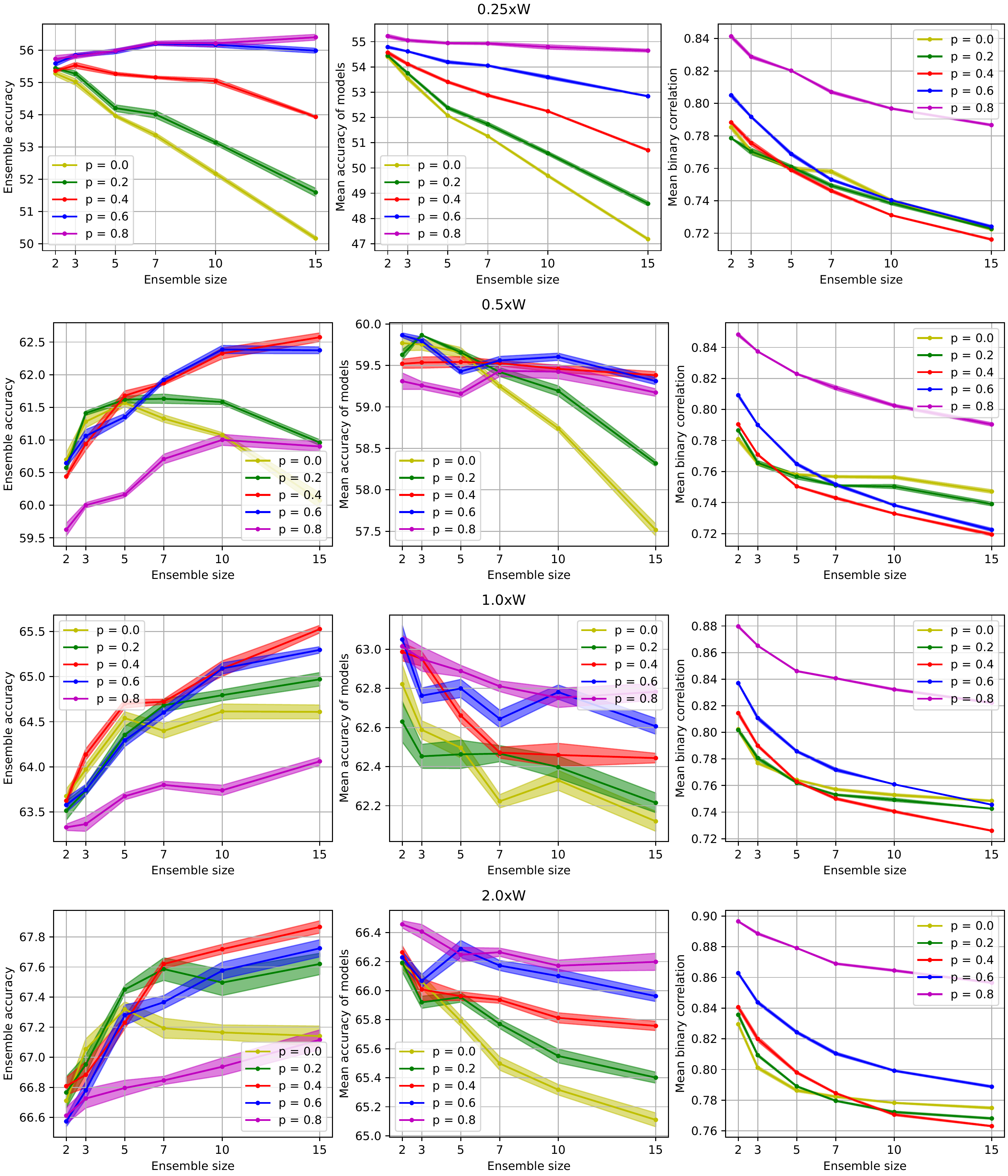}
\caption{Behavior of LLD ensemble with masks of form $\mathcal{N}(p, 1-p^2)$. Different colors represent different values of parameter $p$, while rows belong to different widths of base model.}
\label{fig:last_layer_dropout}
\end{figure}

\end{document}